\documentclass[journal]{IEEEtran}
\IEEEoverridecommandlockouts
\usepackage{graphics} 
\usepackage{epsfig} 
\usepackage{amsmath}
\usepackage{amssymb}
\usepackage{amsthm} 
\usepackage{color}
\usepackage{bm}
\usepackage{subfig}
\usepackage{cite}
\usepackage{makecell}
\usepackage{multicol}

\usepackage{etoolbox}

\makeatletter
\let\NAT@parse\undefined
\makeatother

\makeatletter
\def\footnoterule{\relax
  \kern-5pt
  \hbox to \columnwidth{\vrule width 0.5\columnwidth height 0.4pt\hfill}
  \kern4.6pt}
\makeatother
    
\DeclareMathAlphabet\mathbfcal{OMS}{cmsy}{b}{n}

\DeclareMathOperator*{\argmin}{arg\,min}

\newtheorem{theorem}{Theorem}
\newtheorem{definition}{Definition}
\newtheorem{lemma}{Lemma}
\newtheorem{assumption}{Assumption}
\newtheorem{proposition}{Proposition}
\AfterEndEnvironment{assumption}{\noindent\ignorespaces}

\title{
On Incremental Structure-from-Motion using Lines
}
\author{Andr\'e Mateus*, Omar Tahri, A. Pedro Aguiar, Pedro U. Lima, and Pedro Miraldo

\thanks{$^\star$ Corresponding author: Andr\'e Mateus}
\thanks{A. Mateus, P. U. Lima, and P. Miraldo are with Institute for Systems and Robotics (ISR/IST), LARSyS, Instituto Superior T\'ecnico, Univ Lisboa, 1049-001 Lisboa, Portugal
        {\tt\small \{andre.mateus, pedro.lima, pedro.miraldo\}@tecnico.ulisboa.pt}.}

\thanks{O. Tahri is with VIBOT ERL CNRS 6000, ImViA, Universit\'e Bourgogne Franche-Comt\'e (UBFC), 71200 Le Creusot, France.
        {\tt\small omar.tahri@u-bourgogne.fr}.}

\thanks{A. P. Aguiar is with the Faculty of Engineering, University of
Porto, 4200-465 Porto, Portugal {\tt\small pedro.aguiar@fe.up.pt}}
}

\begin{document}

\maketitle

\begin{abstract}
Humans tend to build environments with structure, which consists of mainly planar surfaces. From the intersection of planar surfaces arise straight lines. 
Lines have more degrees-of-freedom than points.
Thus, line-based Structure-from-Motion (SfM) provides more information about the environment. 
In this paper, we present solutions for SfM using lines, namely, incremental SfM. 
These approaches consist of designing state observers for a camera's dynamical visual system looking at a 3D line. We start by presenting a model that uses spherical coordinates for representing the line's moment vector.
We show that this parameterization has singularities, and therefore we introduce a more suitable model that considers the line's moment and shortest viewing ray. 
Concerning the observers, we present two different methodologies. 
The first uses a memory-less state-of-the-art framework for dynamic visual systems.  
Since the previous states of the robotic agent are accessible --while performing the 3D mapping of the environment-- the second approach aims at exploiting the use of memory to improve the estimation accuracy and convergence speed.
The two models and the two observers are evaluated in simulation and real data, where mobile and manipulator robots are used.
\end{abstract}

\begin{IEEEkeywords}
    Incremental Structure-from-Motion, Visual Servoing, Visual-Based Navigation, Computer Vision for Automation
\end{IEEEkeywords}

\section{Introduction}
\label{sec:intro}

Recovering 3D structure from 2D images is one of the most studied Computer Vision and Robotics problems.
Obtaining 3D information is crucial for various applications for both communities, from the 3D reconstruction of objects/scenes features, to performing Visual Servoing tasks.
Structure-from-Motion (SfM) is a common approach to solve the 3D reconstruction problem.
The goal is to retrieve 3D structures from a set of 2D images, taken by either a moving camera or multiple cameras observing the same scene \cite{koenderink1991,bartoli2005}.
If more than two images are used, one can use refinement strategies, such as Bundle Adjustment, \cite{triggs1999}, which assumes uncertainty in all the involved parameters and performs optimization.

SfM pipelines start by finding feature correspondences from two views, recovering the relative pose, and then 3D information is recovered, e.g., using triangulation.
Traditional approaches to SfM solve the problem for all available images simultaneously.
However, in practice, images are not available simultaneously, namely if the application focuses on monocular vision.
To prevent solving the entire problem every time a new image is available, some researchers shift the focus to incremental approaches to SfM \cite{schonberger2016}.
Taking advantage of the visual dynamical systems presented by Visual Servoing \cite{chaumette2006}, deterministic nonlinear observers have been used to solve the incremental SfM problem, as shown in \cite{deluca2008,civera2010,spica2013,dixon2003,mateus2018,mateus2019,tahri2015,tahri2017}.
Compared to the traditional monocular SfM approaches, which use only image data and thus can only recover the relative pose up to a scale factor \cite{hartley2003}, these methods exploit the camera velocities (linear and angular) to recover the scene scale.
Since odometry is usually available from autonomous agents, incremental SfM is a particularly appealing strategy for robotics.
Observer based SfM has been applied successfully to Visual Servoing in \cite{spica2017}, to the relative pose estimation of ground robots in \cite{rodrigues2019}, to bearing-based localization in \cite{spica2016}, bearing rigidity maintenance in \cite{schiano2017}, and to drone control in \cite{grabe2015,spica2020}.

The usual type of features for applications involving mapping and localization are points. This is due to, for example, 1) the development of robust point feature detectors like SIFT \cite{lowe1999}, and; 2) the geometrical constraints from epipolar geometry, which is the most exploited strategy for relative pose estimation.
However, these are not always easy to detect and match in 2D images, and their detection and description are still ongoing research problems \cite{detone2018}.
The specific type of features to be used must also be linked to the problem we want to tackle.
Given that most SfM applications focus on human-made environments, we should take advantage of their structure.
These environments consist mostly of planar surfaces, which do not contain much texture, essential to point detection \cite{rodrigues2018}.
However, from the intersection of those planar surfaces, straight lines naturally arise.
With respect to points, lines provide more information about the environment \cite{taylor1995}, they are more common in human-made environments and can be more reliably detected \cite{chandraker2009}.
Furthermore, lines have been successfully used as either an alternative to point features in \cite{smith2006,zhang2011,miraldo2014} or a complement to points in \cite{pumarola2017,zuo2017,gomez2019,mateus2020}.

This paper presents solutions to the incremental SfM problem by exploiting straight line features and the camera velocity.
This problem has been addressed in Chaumette et al. \cite{chaumette1996}, but it requires the time derivative of the lines in the image plane to be known and represent lines by the intersection of two planes, which is an implicit representation.
Furthermore, Spica et al. \cite{spica2014} proposed a method to estimate a cylindrical target radius.
Lines are considered indirectly in this process, but only a subset of the line coordinates can be estimated.
To the best of our knowledge, the current state-of-art methods are our previous works \cite{mateus2018,mateus2019}.
In this work, we exploit the line dynamical system presented in \cite{andreff2002}.
Since that system is not fully observable, we propose two different 3D line representations to evaluate which performs better.
We start by focusing on a minimal line parameterization approach --- 3D lines are four degrees of freedom features --- based on spherical coordinates, also referred to as \emph{Sphere}.
This parameterization was first presented in our previous work \cite{mateus2019}.
However, there are some issues concerning minimal parameterizations for lines.
In this case, those coordinates present singularities in the north and south poles of the sphere.
We present the moment-point representation, which does not present singularities even though it is not minimal, to tackle this issue.
The latter representation --- also referred to as \emph{M-P} --- is shown to be more robust to measurement noise, even though it shows a slightly slower convergence.

Recent incremental SfM approaches resort to state observers and dynamical systems for 3D structure recovering.
For the line-based incremental SfM, two observers are considered.
The first is a memory-less observer (\emph{MLO})\footnote{The name memory-less observer was given to this observer to make a distinction between both observers’ types. The observer is memory-less because it uses only the current measurement and camera velocities to compute the next state estimate.} presented in \cite{deluca2008,spica2013}, and
it works as a traditional state observer.
The estimation is updated based on the error between the current estimate and the system output, i.e., it updates the estimates based on the currently available image.
Given the decrease in prices --of hard drives-- robots are now equipped with larger memory devices, which enable them to store past inputs and outputs of the system.
With this in mind, in the second observer, we exploit a Moving Horizon Observer (\emph{MHO}), which keeps a fixed-sized memory of outputs and inputs.
The state is then recovered by solving an optimization problem that minimizes the measurement error of the outputs in the memory and a prediction term.
This prediction term consists of applying the system dynamics to the previous state estimate.
Simulation and real experiments show that the line representation \emph{M-P} proposed in this work coupled with the \emph{MHO} outperforms the \emph{MLO} in robustness to measurement noise while keeping a similar convergence time.

\subsection{Contributions and Outline of the Paper}

This paper is an extension of our previous works \cite{mateus2018} and \cite{mateus2019}.
The novel contributions of this work concerning \cite{mateus2018} and \cite{mateus2019} are:
\begin{itemize}
    \item A singularity free line parameterization \emph{M-P}, which even though is not minimal, is more robust to measurement noise;
    \item An observer for the \emph{M-P} representation based on the framework presented in \cite{spica2013};
    \item The introduction of memory into the Incremental SfM problem, by exploiting Moving Horizon Observers;
    \item Stability results for the \emph{MHO} framework in \cite{alessandri2008} applied to the \emph{M-P} formulation; and
    \item Simulation and real data results comparing the performance of the \emph{MLO} and of the \emph{MHO} for the \emph{Sphere} and \emph{M-P} parameterizations. 
\end{itemize}
As far as applications of the presented method are concerned, we envisage two types of applications. The first consists of plugging the observer in a control scheme, which requires feature depth estimation, such as Visual Servoing \cite{chaumette2006}, namely its Model Predictive Control variants, as in \cite{roque2020}. The second consists of multi-robot mapping, i.e., multiple robots running a single instance of the proposed observer to allow for distributed multiple line estimation.

The remainder of this work is organized as follows.
Sec.~\ref{sec:relWork} presents the related work.
The \emph{Sphere} and \emph{M-P} 3D line representations, and their respective visual dynamic systems are shown in Sec.~\ref{sec:line_dyn}.
The \emph{MLO} systems is presented in Sec.~\ref{sec:obs_nonlinear}.
The application of the \emph{MHO} to lines is presented in Sec.~\ref{sec:mho_lines}.
Sec.~\ref{sec:expResults} presents the simulation and real robot results.
Finally, the conclusions are presented in Sec.~\ref{sec:conclusion}.
\section{Related Work}
\label{sec:relWork}

In this section, we cover previous works on depth estimation with Deep Learning in \ref{sec:deepDepth}.
Approaches for SfM are presented in Sec.~\ref{sec:sfm} and incremental SfM in Sec.~\ref{sec:isfm}.
We also refer to observer techniques used for state estimation, which use memory, in Sec.~\ref{sec:relMHO}.

\subsection{Deep Learning Depth Prediction}
\label{sec:deepDepth}

Deep Learning has been successfully used for a wide range of applications, such as depth prediction/estimation.
In \cite{liu2015,liu2015b} continuous conditional random fields are exploited to perform depth estimation without geometric priors.
Deep depth estimation is usually posed as a regression problem, which can lead to slow convergence.
In \cite{fu2018} the converge issue is tackled by recasting the problem as an ordinal regression network.
Nevertheless, the majority of methods required supervision, and thus a large amount of annotated data.
To cope with this issue, unsupervised architectures were presented in \cite{garg2016,godard2017,zhan2018}.

The previous methods rely on powerful GPUs for training and inference, but for robotic applications, the use of that hardware leads to an increased price and power consumption.
To deal with the lower computation power of robotic platforms,  \cite{wofk2019} presented an architecture capable of achieving competitive performance while achieving a higher frame-rate in an embedded GPU. Furthermore, it still achieves real-time performance running on a CPU.
An alternative architecture for depth estimation on CPU is presented in \cite{poggi2018}.

Keep in mind that most previous works focus on single image depth estimation, which can only recover depth up to a scale factor unless the true scale of known objects is provided during training.
In this work, we exploit multiple-view and camera velocities to estimate the depth and recover the features' true scale.

\subsection{Structure-from-Motion}
\label{sec:sfm}

There is a vast literature concerning SfM, and it would be impossible to report an extensive state-of-the-art in this paper.
For a comprehensive survey, see \cite{ozyesil2017}.
Full pipelines for large scale SfM framework are available in the literature for city-scale reconstruction, such as \cite{agarwal2011}.
A Bundle Adjustment strategy is used to reduce the time complexity and improve the accuracy of SfM in \cite{wu2013}.
Feature matching between pairs of images is one of the problems in SfM frameworks.
To tackle this issue, in \cite{dellaert2000}, the Expectation-Maximization algorithm is exploited to perform SfM without the need for explicit correspondences.
In \cite{qian2004}, Sequential Monte Carlo methods are used to improve robustness to errors in feature tracking and occlusion.
To deal with outliers in the features correspondences, in \cite{nister2005}, a preemptive RANSAC is proposed.
In \cite{schonberger2016}, a framework for robust and scalable SfM is presented and is evaluated in a wide variety of datasets.
The authors made the method and the data available in a general-purpose pipeline for structure-from-motion denoted as COLMAP.

SfM using lines has been addressed in \cite{taylor1995}, where a cost function based on the distance of edge points on line segments in the image is derived.
A method using infinite lines is presented in \cite{bartoli2005}. Authors use Pl\"ucker coordinates to represent 3D lines. 
In \cite{chandraker2009}, a solution for line reconstruction using a stereo pair is presented.
The cost function is based on the fact that the interpretation planes from two pairs of images, observing a 3D line, intersect in the line.
By exploiting the Manhattan world assumption, \cite{schindler2006} presents a method for the recovery of 3D lines from multiple views, with a wide baseline. 

\subsection{Incremental Structure-from-Motion}
\label{sec:isfm}

Traditional SfM approaches consider a set of images that need not be ordered and have large displacements between them.
Small displacements lead to large errors because of the closeness to degenerative configurations.
However, for applications like visual servo control, a sequential and small baseline set of images is required.
One solution to this problem is to perform SfM online, which consists of the simultaneous localization and mapping (SLAM \cite{cadena2016}) problem.
Even though most approaches focus on point features, e.g. MonoSLAM \cite{davison2007}, RSLAM \cite{mei2011}, ORB-SLAM\cite{mur2015,mur2017,campos2020}, Endres et. al. \cite{endres20133}, and DS-SLAM\cite{yu2018}.
Lines have been used, especially to handle low texture environments \cite{gomez2019}.
Extensions to ORB-SLAM to account for line features are presented in \cite{pumarola2017,zuo2017}.
Line segments have also been successfully used for SLAM while paired with point features in \cite{gomez2019}.

The previous methods allow the reconstruction of a large set of features. However --when using monocular vision-- depth can only be recovered up to a common scale factor.
To achieve metric reconstruction, they require either stereo vision or the 3D structure of a set of features to be known.
To use monocular vision and handle small baselines, some authors took advantage of robotic systems. 
Since an estimate of the velocity is usually available, the dynamical model --of the motion of the observed features in the image concerning the camera velocity-- can be exploited.
These approaches are based on filtering or observer design.
Filtering approaches tend to exploit the use of Kalman Filter and its variants.
In \cite{civera2008}, an Extended Kalman Filter (EKF) is used to reconstruct the 3D environment and camera pose by estimating the inverse depth of points.
The same authors in \cite{civera2010} combine an EKF with RANSAC to solve the multiple point reconstruction problem. 
The reconstruction of a planar target is addressed in \cite{omari2013}, where an Unscented Kalman Filter is used to filter measurements from a visual-inertial system.
EKF has been used for SLAM exploiting lines in \cite{smith2006,zhang2011}.

A limitation of the previous filtering approaches is their reliance on EKF, which requires linearization of the dynamics.
Thus, the estimation is susceptible to poor initial estimates, which may cause the filter to diverge.
To address the linearization issues, nonlinear observers have been used.
Observer-based methods consist of exploiting the visual dynamical systems developed in Visual Servoing \cite{chaumette2006} and design a nonlinear estimation scheme--to retrieve 3D information--given the camera velocities and image measurements.
In \cite{dixon2003}, an observer to estimate the depth of points moving with affine motion is presented.
An estimation scheme to retrieve point depth and camera focal length, given rigid motion of the camera, is presented in \cite{deluca2008}.
Reduced-order observers for range estimation are presented in \cite{morbidi2010,sassano2010,dani2012}.
In \cite{rodrigues2019} the depth estimation and relative pose estimation problems are addressed. The solution exploits an observer for depth estimation, which is then inputted into an EKF that estimates the relative pose.
Filtering and observer-based approaches are compared in \cite{grabe2015}.
With respect to SLAM approaches \cite{gomez2019}, these methods recover depth by combining image measurements and velocity readings, but they estimate a smaller set of features.
Nonetheless, they allow to characterize the convergence of the depth estimate and provide guarantees.

Since the observer-based methods tend to be applied in robotic applications, we can assume that we can control the robot and the camera movement.
Given this assumption, some authors have studied how the movement of the camera affects depth estimation.
In \cite{chaumette1996}, the optimal movements of a camera to estimate 3D information of points, lines, and spheres are presented.
However, the estimation scheme requires the time derivative of the feature motion in the image plane.
More recently, a framework for \emph{Active Structure-from-Motion} was presented in \cite{spica2013}.
This framework has been successfully applied to different visual features.
Namely, points, cylinders and spheres in \cite{spica2014}, planes and image moments in \cite{spica2015,spica2015b}, rotation invariants in \cite{tahri2015,tahri2017}, and lines in \cite{mateus2018,mateus2019}.
A method to couple this framework with a Visual Servoing control loop is presented in \cite{spica2017}.
An alternative framework for active depth estimation of points is presented in \cite{rodrigues2020}, where convergence guarantees are provided, with fewer constraints on the camera motion relative to \cite{spica2013}.

\subsection{Moving Horizon Observers}
\label{sec:relMHO}

Even though the current observer-based approaches allowed to tackle the linearization issue of the EKF based approaches and to derive control laws capable of optimizing state estimation, they do not take into account measurement and modeling errors.
To account for those errors, we propose to exploit Moving Horizon Observers. 
Finite memory observers are commonly referred to as Moving Horizon Observers (\emph{MHO}) or Receding Horizon Observers.
One of these observers' first applications was to deal with the Kalman Filter's limitations, mainly by adding constraints to the state space, thus preventing physically unfeasible state estimates.
In \cite{muske1993}, it was shown that unconstrained \emph{MHO} formulation achieves similar performance to the standard Kalman Filter for linear systems and that introducing constraints improves the robustness of the estimation.
Extensions to distributed linear systems are presented in \cite{farina2010}, and \cite{goodwin2004}.

As far as nonlinear systems are concerned, the cost formulation based on measurement error is presented in \cite{michalska1995} alongside stability and convergence results.
However, neither measurement and modeling noises were considered.
In \cite{rao2003} measurement noise is taken into account in the formulation.
Besides, an arrival cost to the initial state is included in the cost function.
An alternative formulation is presented in \cite{alessandri2008}, where the mean square error of the measurement concerning the state evolution is coupled with a prediction error.
The latter consists of the current estimate's squared error with the prediction made by applying the error-free dynamics to the previous state estimate.
An alternative prediction is presented in \cite{suwantong2014}, where instead of applying the error-free dynamics, a deterministic observer is used to compute the prediction.
However, this requires a larger estimation window to provide similar results.
Since these observers consist of optimizing a cost function over a finite size window, weighting techniques have been used to account for time instants, where the data is non-exciting  in \cite{sui2011}, and for delayed and lost measurements in  \cite{johansen2013}.
\section{Visual Dynamic Model of a Line}
\label{sec:line_dyn}

In this section, the definition of 3D lines using Pl\"ucker coordinates is presented in Sec.~\ref{sec:pluckCoord}.
The minimal model based on spherical coordinates, which we proposed in \cite{mateus2019}, is presented in Sec.~\ref{sec:spherical}, and its advantages relative to the usual Pl\"ucker coordinates are stated.
Finally, a model --that avoids the singularities of the spherical coordinates -- is proposed in Sec.~\ref{sec:newlinerep}, and its dynamics are computed.

\subsection{Pl\"ucker coordinates of a 3D Line}
\label{sec:pluckCoord}

3D straight lines are four degrees of freedom features, as explained in \cite{roberts1988}.
However, none of the current representations using four parameters can describe all 3D lines (including infinity), except the one presented in \cite{bartoli2005}. 
Nonetheless, this representation involves a QR factorization, and the computation of its dynamics is not straightforward.
In practice, many works use more degrees of freedom for that purpose. 
\emph{Pl\"ucker coordinates} is one of those non-minimal representations (see \cite{pottmann2009} for more detail). 
To cope with the up to scale definition of this parameterization, previous authors use a five-degree of freedom explicit representation using \emph{binormalized Pl\"ucker Coordinates} (see \cite{andreff2002}):
\begin{align}
    & \mathbfcal{L}  = \begin{bmatrix} \mathbf{d}\\ l\mathbf{m} \end{bmatrix},
    \label{eq:biline} \\
    & \text{with}\ \ \mathbf{m}^T\mathbf{d} = 0,
    \label{eq:ortho}
\end{align}
where $\mathbfcal{L}$ is the 3D line given by six coordinates of an element of the five-dimensional projective space $\mathcal{P}^5$, $\mathbf{d} \in \mathbb{R}^3$ is a unit vector representing the direction of the line, $\mathbf{m} \in \mathbb{R}^3$ denotes the line's unit moment vector, and $l \in \mathbb{R}$ is a depth parameter representing the geometric distance between the line and the correspondent reference frame (see Fig.~\ref{fig:line_proj}).

\begin{figure}
    \centering
    \def\svgwidth{.4\textwidth}
\begingroup%
  \makeatletter%
  \providecommand\color[2][]{%
    \errmessage{(Inkscape) Color is used for the text in Inkscape, but the package 'color.sty' is not loaded}%
    \renewcommand\color[2][]{}%
  }%
  \providecommand\transparent[1]{%
    \errmessage{(Inkscape) Transparency is used (non-zero) for the text in Inkscape, but the package 'transparent.sty' is not loaded}%
    \renewcommand\transparent[1]{}%
  }%
  \providecommand\rotatebox[2]{#2}%
  \newcommand*\fsize{\dimexpr\f@size pt\relax}%
  \newcommand*\lineheight[1]{\fontsize{\fsize}{#1\fsize}\selectfont}%
  \ifx\svgwidth\undefined%
    \setlength{\unitlength}{1559.05511811bp}%
    \ifx\svgscale\undefined%
      \relax%
    \else%
      \setlength{\unitlength}{\unitlength * \real{\svgscale}}%
    \fi%
  \else%
    \setlength{\unitlength}{\svgwidth}%
  \fi%
  \global\let\svgwidth\undefined%
  \global\let\svgscale\undefined%
  \makeatother%
  \begin{picture}(1,0.54545455)%
    \lineheight{1}%
    \setlength\tabcolsep{0pt}%
    \put(0,0){\includegraphics[width=\unitlength]{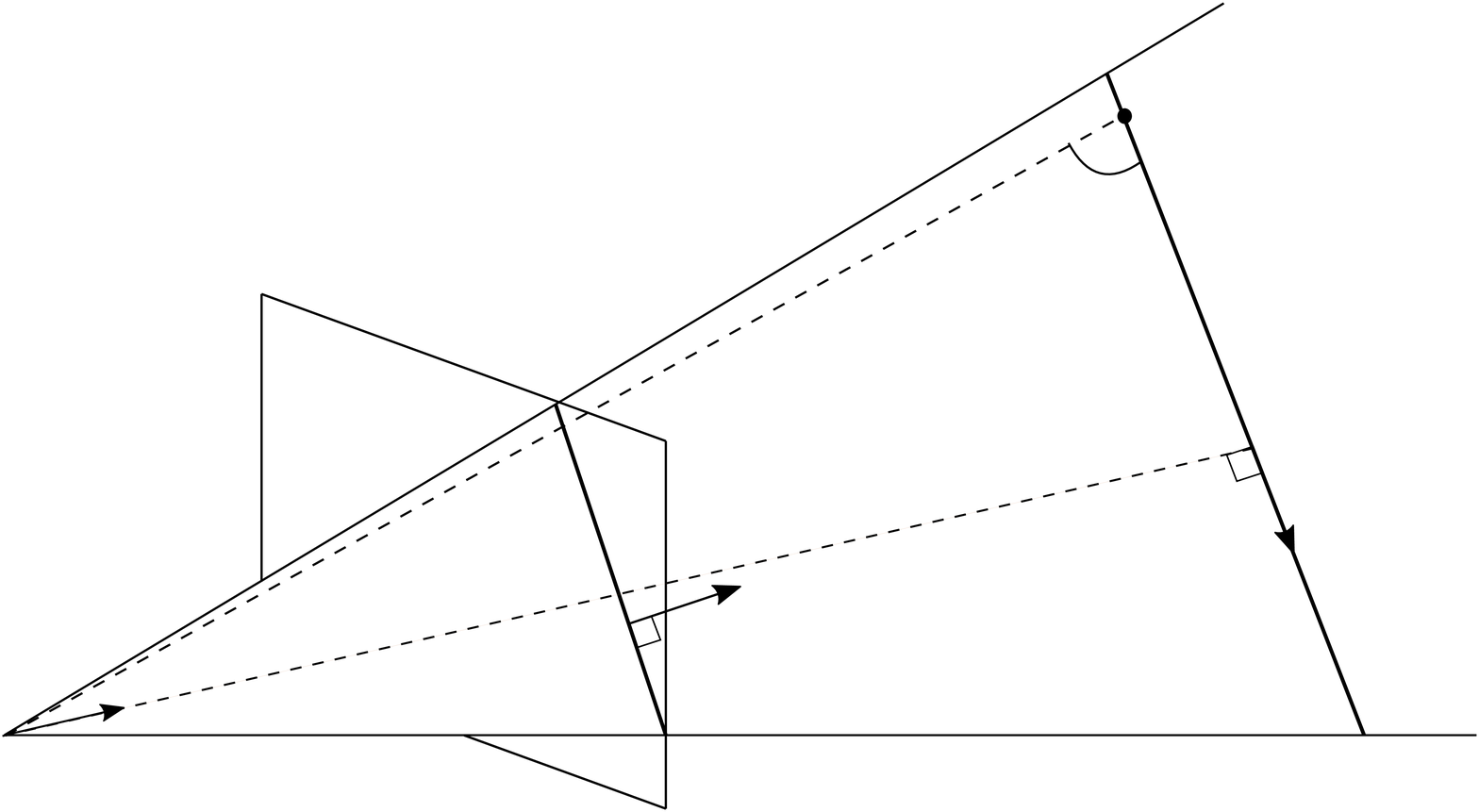}}%
    \put(0.88410574,0.19693214){\color[rgb]{0,0,0}\makebox(0,0)[lt]{\lineheight{1.25}\smash{\begin{tabular}[t]{l}$\mathbf{d}$\end{tabular}}}}%
    \put(0.48143206,0.1032579){\color[rgb]{0,0,0}\makebox(0,0)[lt]{\lineheight{1.25}\smash{\begin{tabular}[t]{l}$\mathbf{m}$\end{tabular}}}}%
    \put(0.62240907,0.20246037){\color[rgb]{0,0,0}\makebox(0,0)[lt]{\lineheight{1.25}\smash{\begin{tabular}[t]{l}$l$\end{tabular}}}}%
    \put(-0.03830684,0.01877018){\color[rgb]{0,0,0}\makebox(0,0)[lt]{\lineheight{1.25}\smash{\begin{tabular}[t]{l}$\mathcal{O}$\end{tabular}}}}%
    \put(0.93966288,0.08014577){\color[rgb]{0,0,0}\makebox(0,0)[lt]{\lineheight{1.25}\smash{\begin{tabular}[t]{l}$\mathcal{L}$\end{tabular}}}}%
    \put(0.10759063,0.07353824){\color[rgb]{0,0,0}\makebox(0,0)[lt]{\lineheight{1.25}\smash{\begin{tabular}[t]{l}$\bm{\chi}$\end{tabular}}}}%
    \put(0.71421355,0.39290774){\color[rgb]{0,0,0}\makebox(0,0)[lt]{\lineheight{1.25}\smash{\begin{tabular}[t]{l}$\gamma$\end{tabular}}}}%
    \put(0.78146825,0.45974107){\color[rgb]{0,0,0}\makebox(0,0)[lt]{\lineheight{1.25}\smash{\begin{tabular}[t]{l}$\mathbf{p}$\end{tabular}}}}%
  \end{picture}%
\endgroup%

    \caption{\it Projection of a 3D straight line ($\mathcal{L}$) onto a perspective image with optical center $\mathcal{O}$. The plane, which contains the line, and the optical center is the interpretation plane. The binormalized Pl\"ucker Coordinates are presented, with $\mathbf{d}$ being the direction vector of the line, $\mathbf{m}$ the moment vector, $l$ the line depth, and $\mathbf{p}$ is a point in the line. The vector $\bm{\chi}$ in \eqref{eq:chi} is the direction of the view ray associated with the closest point on the line to the optical center. Finally, $\gamma$ denotes the angle between the view ray and the line direction.}
    \label{fig:line_proj}
\end{figure}

Without loss of generality, setting the reference frame in the origin of the camera's optical center, we define the moment vector as parallel to the vector normal to the interpretation plane, which is defined by the 3D line and the origin (i.e., camera center). See \cite{hartley2003} for more detail about the underlying constraints. We then define
\begin{equation}
    \mathbf{m} = \frac{\mathbf{p} \times \mathbf{d}}{|| \mathbf{p} ||\sin(\gamma)},
    \label{eq:h_def}
\end{equation}
where $\mathbf{p} \in \mathbb{R}^3$ is any point in the line, and $\gamma$ denotes the angle between the view ray of the point and the direction vector.
The depth of the line is defined as:
\begin{equation}
    l = || \mathbf{p}|| \sin(\gamma),
\end{equation}
for any $\mathbf{p}$ on the line.
By definition, the line's image--in the normalized image plane--is given by the intersection of the interpretation plane with the image plane. 
The normalized moment vector $\mathbf{m}$ of the line can then be fully measured from the line image.

In this work, we aim to design observers to recover the full line state (\emph{Pl\"ucker coordinates} see \eqref{eq:biline}) from the measures $\mathbf{y} = \mathbf{m}$, and the camera velocities, which we assume to be known.
Now, we make the following assumptions:
\begin{assumption}
    The depth of the line must be positive at all times, i.e., $l > 0$.
    \label{assump:depth}
\end{assumption}
This practical assumption is based on the fact that, if $l = 0$, we reach a geometrical singularity. Specifically, the projection of a line is a single point in the image.

\begin{assumption}
    The intervals of time where the linear velocity is null or is contained in the line interpretation plane are bounded below and above by positive constants and do not occur infinitely often. 
    \label{assump:linearvel}
\end{assumption}
The non zero linear velocity is a known requirement for depth estimation, as shown in \cite{deluca2008}.
Besides, a visual dynamical system consists of the interaction matrix multiplied by the camera velocities, as shown in \cite{chaumette2006}.
Thus, those systems are affine with respect to the input.
If the inputs are identically zero, the state will be constant.
If the camera linear velocity belongs to the line interpretation plane, the system is not observable (details are shown in Sec.~\ref{sec:spicaMP}).
Assumption~\ref{assump:linearvel} is a very weak restriction and is meant to ensure that the periods of time --that the system is observable-- are dominant with respect to the periods that it is not. 
Technically, it will allow us to conclude that the error will converge to zero as $t\to\infty$. 
The behavior of the \emph{MHO} and \emph{MLO} -- when there are non observable time intervals -- is evaluated in Sec.~\ref{sec:sim}.

\subsection{3D Lines in Spherical Coordinates}
\label{sec:spherical}

A visual dynamical system is observable in the sense of typical observer design such as the one proposed in \cite{spica2013} if the matrix -- giving the influence of the unmeasurable state variables in the dynamics of the measurable ones -- is full column rank.
The previous condition is not verified for the dynamics of \emph{Pl\"ucker coordinates} presented in \cite{andreff2002}.
To address this issue, \cite{mateus2018} proposed a variable change to achieve an observable system if an additional algebraic constraint was enforced.
To prevent the need to solve a dynamic-algebraic system, we proposed a minimal line representation in \cite{mateus2019}.  
This representation is based on spherical coordinates and can be obtained from the \emph{Pl\"ucker coordinates} in two steps.
The first step consists of transforming $\mathbf{m}$ to spherical coordinates.
Since it is a unit vector --its length is one-- then we can represent it using only the spherical angles (see Fig.~\ref{fig:sphericalSteps}\subref{fig:step1}).
The second step consists of computing two vectors orthogonal to $\mathbf{m}$ and each other.
This allows us to construct an orthonormal matrix to which we project the direction vector $\mathbf{d}$ --previous scaled by the inverse line depth $l$ (see Fig.~\ref{fig:sphericalSteps}\subref{fig:step2}).

Using spherical coordinates, a line is defined using four parameters as
\begin{equation}
    \mathbfcal{L}_S = \begin{bmatrix} \theta \\ \phi \\ \eta_1 \\ \eta_2 \end{bmatrix},
    \label{eq:linesphere}
\end{equation}
where $-\pi \leq \theta \leq \pi$ is the azimuth, $-\frac{\pi}{2} \leq \phi \leq \frac{\pi}{2}$ is the zenith angle, $\eta_1$ and $\eta_2$ are the projection of the direction vector --scaled by the inverse depth-- onto the two remaining basis vectors.
Notice that spherical coordinates have two singularities associated with their discontinuity on the sphere's north and south poles.

\begin{figure}
    \centering
    \subfloat[Euclidean to spherical coordinates.]{
        \def\svgwidth{.19\textwidth}
\begingroup%
  \makeatletter%
  \providecommand\color[2][]{%
    \errmessage{(Inkscape) Color is used for the text in Inkscape, but the package 'color.sty' is not loaded}%
    \renewcommand\color[2][]{}%
  }%
  \providecommand\transparent[1]{%
    \errmessage{(Inkscape) Transparency is used (non-zero) for the text in Inkscape, but the package 'transparent.sty' is not loaded}%
    \renewcommand\transparent[1]{}%
  }%
  \providecommand\rotatebox[2]{#2}%
  \newcommand*\fsize{\dimexpr\f@size pt\relax}%
  \newcommand*\lineheight[1]{\fontsize{\fsize}{#1\fsize}\selectfont}%
  \ifx\svgwidth\undefined%
    \setlength{\unitlength}{300bp}%
    \ifx\svgscale\undefined%
      \relax%
    \else%
      \setlength{\unitlength}{\unitlength * \real{\svgscale}}%
    \fi%
  \else%
    \setlength{\unitlength}{\svgwidth}%
  \fi%
  \global\let\svgwidth\undefined%
  \global\let\svgscale\undefined%
  \makeatother%
  \begin{picture}(1,1.125)%
    \lineheight{1}%
    \setlength\tabcolsep{0pt}%
    \put(0,0){\includegraphics[width=\unitlength]{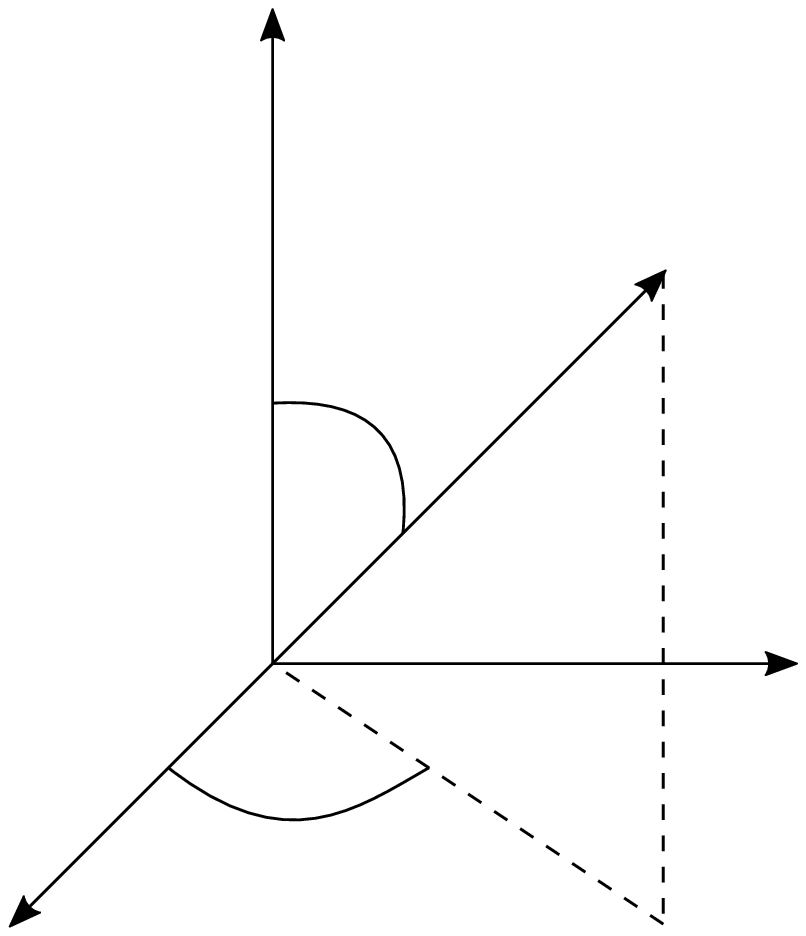}}%
    \put(0.33499996,1.17999787){\color[rgb]{0,0,0}\makebox(0,0)[lt]{\lineheight{1.25}\smash{\begin{tabular}[t]{l}$z$\end{tabular}}}}%
    \put(1.03499984,0.3549996){\color[rgb]{0,0,0}\makebox(0,0)[lt]{\lineheight{1.25}\smash{\begin{tabular}[t]{l}$y$\end{tabular}}}}%
    \put(-0.065,-0.02000184){\color[rgb]{0,0,0}\makebox(0,0)[lt]{\lineheight{1.25}\smash{\begin{tabular}[t]{l}$x$\end{tabular}}}}%
    \put(0.29999989,0.06499802){\color[rgb]{0,0,0}\makebox(0,0)[lt]{\lineheight{1.25}\smash{\begin{tabular}[t]{l}$\theta$\end{tabular}}}}%
    \put(0.45999997,0.65499989){\color[rgb]{0,0,0}\makebox(0,0)[lt]{\lineheight{1.25}\smash{\begin{tabular}[t]{l}$\phi$\end{tabular}}}}%
    \put(0.8500001,0.85999886){\color[rgb]{0,0,0}\makebox(0,0)[lt]{\lineheight{1.25}\smash{\begin{tabular}[t]{l}$\mathbf{m}$\end{tabular}}}}%
  \end{picture}%
\endgroup%

        \label{fig:step1}
    }
    \quad
    \subfloat[Projection to orthonormal basis.]{
        \def\svgwidth{.19\textwidth}
\begingroup%
  \makeatletter%
  \providecommand\color[2][]{%
    \errmessage{(Inkscape) Color is used for the text in Inkscape, but the package 'color.sty' is not loaded}%
    \renewcommand\color[2][]{}%
  }%
  \providecommand\transparent[1]{%
    \errmessage{(Inkscape) Transparency is used (non-zero) for the text in Inkscape, but the package 'transparent.sty' is not loaded}%
    \renewcommand\transparent[1]{}%
  }%
  \providecommand\rotatebox[2]{#2}%
  \newcommand*\fsize{\dimexpr\f@size pt\relax}%
  \newcommand*\lineheight[1]{\fontsize{\fsize}{#1\fsize}\selectfont}%
  \ifx\svgwidth\undefined%
    \setlength{\unitlength}{300bp}%
    \ifx\svgscale\undefined%
      \relax%
    \else%
      \setlength{\unitlength}{\unitlength * \real{\svgscale}}%
    \fi%
  \else%
    \setlength{\unitlength}{\svgwidth}%
  \fi%
  \global\let\svgwidth\undefined%
  \global\let\svgscale\undefined%
  \makeatother%
  \begin{picture}(1,1.125)%
    \lineheight{1}%
    \setlength\tabcolsep{0pt}%
    \put(0,0){\includegraphics[width=\unitlength]{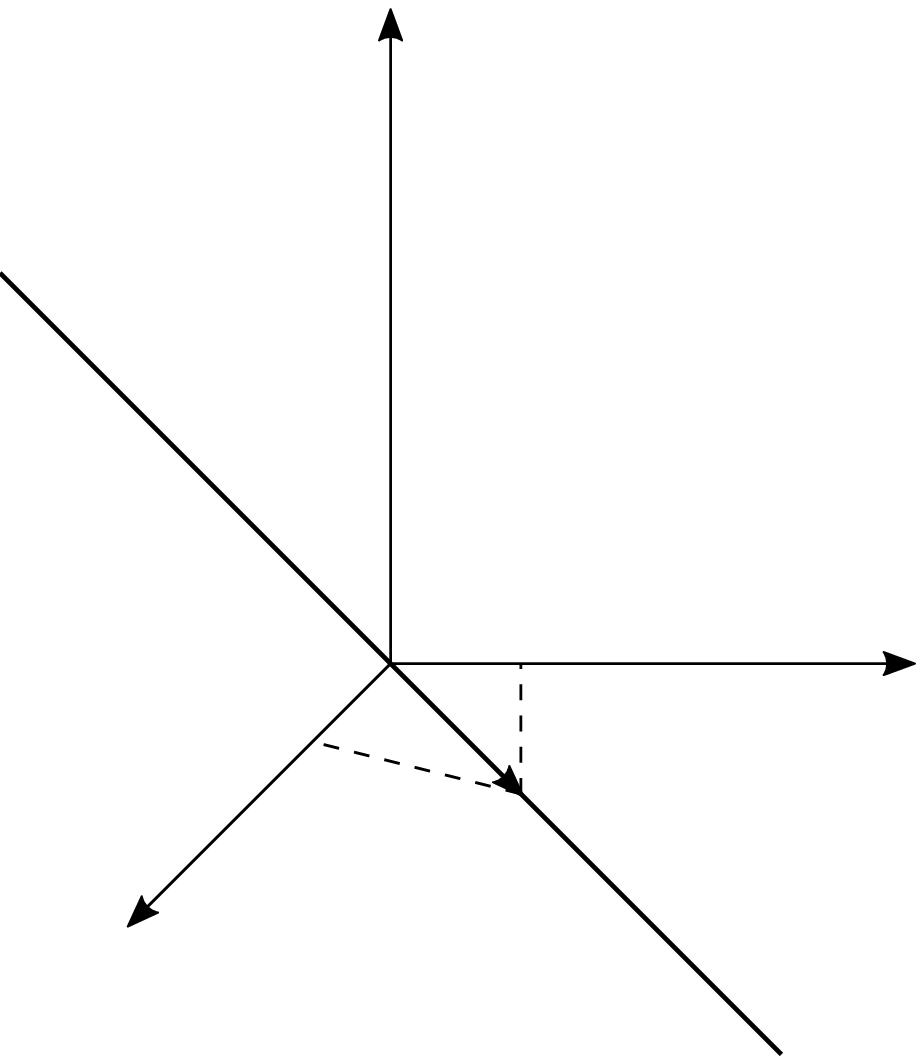}}%
    \put(0.33499996,1.17999787){\color[rgb]{0,0,0}\makebox(0,0)[lt]{\lineheight{1.25}\smash{\begin{tabular}[t]{l}$\mathbf{m}$\end{tabular}}}}%
    \put(0.78999975,0.43999932){\color[rgb]{0,0,0}\makebox(0,0)[lt]{\lineheight{1.25}\smash{\begin{tabular}[t]{l}$\mathbf{m}\times\mathbf{m}_p$\end{tabular}}}}%
    \put(-0,0.05499931){\color[rgb]{0,0,0}\makebox(0,0)[lt]{\lineheight{1.25}\smash{\begin{tabular}[t]{l}$\mathbf{m}_p$\end{tabular}}}}%
    \put(0.595,0.23500092){\color[rgb]{0,0,0}\makebox(0,0)[lt]{\lineheight{1.25}\smash{\begin{tabular}[t]{l}$\mathbf{d}$\end{tabular}}}}%
    \put(0.51499999,0.45999992){\color[rgb]{0,0,0}\makebox(0,0)[lt]{\lineheight{1.25}\smash{\begin{tabular}[t]{l}$\eta_2$\end{tabular}}}}%
    \put(0.18999997,0.34000052){\color[rgb]{0,0,0}\makebox(0,0)[lt]{\lineheight{1.25}\smash{\begin{tabular}[t]{l}$\eta_1$\end{tabular}}}}%
    \put(0.15,0.71000059){\color[rgb]{0,0,0}\makebox(0,0)[lt]{\lineheight{1.25}\smash{\begin{tabular}[t]{l}$\mathcal{L}$\end{tabular}}}}%
  \end{picture}%
\endgroup%

        \label{fig:step2}
    }
    \caption{\it Steps to obtain the minimal line modeling presented in \cite{mateus2019}. On the left, the transformation of the moment vector from Euclidean to spherical coordinates. Since the vector has a unit norm, we can define it using only the angles. On the right, the direction vector's projection to the orthonormal basis with the moment as one of the basis vectors. Given the orthogonality, in \eqref{eq:ortho} the projection yields two non-zero parameters.}
    \label{fig:sphericalSteps}
\end{figure}

The line dynamics are given as
\begin{align}
    \dot{\theta} & = \frac{-\bm{\omega}_c^T  \mathbf{m}_P  + \bm{\nu}_c^T\mathbf{m}_S\eta_1}{\cos(\phi)} \label{eq:dt}\\ 
    \dot{\phi} & = -\bm{\omega}_c^T (\mathbf{m}_S \times \mathbf{m}_P) + \bm{\nu}_c^T\mathbf{m}_S \eta_2 
    \label{eq:dp} \\ \nonumber
    \dot{\eta}_1 & = -\bm{\omega}_c^T \left( \mathbf{m}_P\tan(\phi) + \mathbf{m}_S \right)\eta_2 + \\ \nonumber
    & \ \ \ \ \ \ \ \ + \bm{\nu}_c^T \left(  \left(\mathbf{m}_S\tan(\phi) - \mathbf{m}_P \right)\eta_1\eta_2 + (\mathbf{m}_S\times\mathbf{m}_P)\eta_1^2\right) \\ 
    & = f_{\eta_1}(\theta,\phi,\eta_1,\eta_2, \bm{\nu}_c, \bm{\omega}_c)
   \label{eq:eta1_dyn} \\ \nonumber
    \dot{\eta}_2 & = \bm{\omega}_c^T\left( \mathbf{m}_P \tan(\phi) + \mathbf{m}_S \right)\eta_1 + \\ \nonumber
    & \ \ \ \ \ \ \ \ + \bm{\nu}_c^T \left( (\mathbf{m}_S\times\mathbf{m}_P)\eta_1\eta_2 - \mathbf{m}_S\tan(\phi)\eta_1^2 - \mathbf{m}_P\eta_2^2 \right) \\ 
    & = f_{\eta_2}(\theta,\phi,\eta_1,\eta_2, \bm{\nu}_c,\bm{\omega}_c)
    \label{eq:eta2_dyn},
\end{align}
with 
\begin{equation}
    \mathbf{m}_{S} = \begin{bmatrix} \cos(\theta) \cos(\phi)\\
                    \sin(\theta) \cos(\phi)\\
                    \sin(\phi)
                \end{bmatrix}
    \label{eq:hsph}
\end{equation}
and
\begin{equation}
    \mathbf{m}_P = \begin{bmatrix} \cos(\theta) \sin(\phi)\\
                    \sin(\theta) \sin(\phi)\\
                    - \cos(\phi)
                \end{bmatrix}
    \label{eq:hpsph}.
\end{equation}

Even though the previous dynamical system is observable, it has two singularities.
These can be easily found from inspection of \eqref{eq:dt}, and correspond to the poles of the sphere, more precisely when $\phi = \pm \frac{\pi}{2}$.
Similar singularities can be found in \cite{bartoli2005}, where Euler angles are used for representing rotation. 
The next section presents a novel representation that solves this problem. 

\subsection{Moment-Point Representation}
\label{sec:newlinerep}

This section presents a formulation that is both observable and does not suffer from the spherical coordinates singularities.
Let us define the vector $\bm{\chi}$ as
\begin{equation}
    \bm{\chi} = \frac{\mathbf{d} \times \mathbf{m}}{l}.
    \label{eq:chi}
\end{equation}
This vector represents the direction of the line point, which is closest to the camera's optical center, scaled by the inverse depth (in this case the depth of the point itself). 
A geometric representation of this vector is presented in Fig.~\ref{fig:line_proj}.
In this representation, a line is defined using six parameters as
\begin{equation}
    \mathbfcal{L}_{MP} = \begin{bmatrix} \mathbf{m} \\ \bm{\chi}  \end{bmatrix}.
    \label{eq:lineMP}
\end{equation}

Applying the transformation in \eqref{eq:chi} to the dynamics presented in \cite{andreff2002} the dynamics of a 3D line in the moment-point coordinates is given by
\begin{align}
        \dot{\mathbf{m}} & = \mathbf{g}_{\mathbf{m}}(\mathbf{m},\bm{\chi},\bm{\nu}_c,\bm{\omega}_c) =  [\bm{\omega}_c]_{\text{x}} \mathbf{m} - (\bm{\nu}_c^T\mathbf{m}) \bm{\chi} \label{eq:mdyn},\\ \nonumber
    \dot{\bm{\chi}} & =  \mathbf{g}_{\bm{\chi}}(\mathbf{m},\bm{\chi},\bm{\nu}_c,\bm{\omega}_c) \\ 
    & =  [\bm{\omega}_c]_{\text{x}} \bm{\chi} - (\bm{\nu}_c^T\mathbf{m}) \mathbf{m} \bm{\chi}^T \bm{\chi} + (\bm{\nu}_c^T\bm{\chi}) \bm{\chi} \label{eq:chidyn},
\end{align}
where $[\mathbf{m}]_{\text{x}}$ is the skew-symmetric that linearizes the cross product, such that $ \mathbf{m} \times \mathbf{d} = [\mathbf{m}]_{\text{x}}\mathbf{d}$.
Contrarily to \cite{andreff2002}, the influence of the unmeasurable variables is a full rank matrix in \eqref{eq:mdyn}, and the system is observable, for more details see Sec.~\ref{sec:spicaMP}.
\section{A Memory-free Observer for Dynamical Visual Systems}
\label{sec:obs_nonlinear}

In this work, the goal is to estimate the \emph{Pl\"ucker coordinates} of a line in 3D space. 
Since the dynamics of these coordinates, as presented in \cite{andreff2002}, result in an unobservable system, two alternative representations have been presented in Sec.~\ref{sec:line_dyn}, from which the \emph{Pl\"ucker coordinates} can be retrieved.
In this section, the design of observers as proposed in \cite{deluca2008}, and \cite{spica2013}, for both representations, are presented in Sec.~\ref{sec:spicaSphere}, and Sec.~\ref{sec:spicaMP} respectively.
Finally, the design of the gain matrices --of each observer-- are presented in Sec.~\ref{sec:designH}.

\subsection{Observer Design: Sphere}
\label{sec:spicaSphere}

This section presents an observer for the dynamical system proposed in \cite{mateus2019} and presented in Sec.~\ref{sec:spherical}. 
The observer defined as
\begin{align}
\begin{split}
    \begin{bmatrix} \dot{\hat{\theta}} \\ \dot{\hat{\phi}}\end{bmatrix} = & -\bm{\omega}_c^T \begin{bmatrix} \frac{\mathbf{m}_P}{\cos(\phi)} \\ (\mathbf{m}_S \times \mathbf{m}_P) \end{bmatrix}  + \bm{\Omega}_S^T  \begin{bmatrix} \hat{\eta_1} \\ \hat{\eta_2} \end{bmatrix} + \mathbf{H}_S \begin{bmatrix} \tilde{\theta} \\ \tilde{\phi} \end{bmatrix}\\
    \begin{bmatrix} \dot{\hat{\eta}}_1\\ \dot{\hat{\eta}}_2\end{bmatrix} = & \begin{bmatrix} f_{\eta_1}(\theta,\phi,\hat{\eta_1},\hat{\eta_2}, \bm{\nu}_c, \bm{\omega}_c) \\ f_{\eta_2}(\theta,\phi,\hat{\eta_1},\hat{\eta_2}, \bm{\nu}_c,\bm{\omega}_c) \end{bmatrix} + \alpha\bm{\Omega_S} \begin{bmatrix} \tilde{\theta} \\ \tilde{\phi} \end{bmatrix},
\end{split}
    \label{eq:spherical_observer}
\end{align}
with $\bm{\Omega}_S = \bm{\nu}_c^T\mathbf{m}_S \, \text{diag}(\cos(\phi)^{-1},1)$, $\mathbf{H}_S \succ 0$, $\tilde{\theta} = \theta - \hat{\theta}$, and $\tilde{\phi} = \phi - \hat{\phi}$, recovers the state in \eqref{eq:linesphere}, as long as the \emph{persistence of excitation condition} (see \cite{deluca2008,spica2013}) is verified. 
This condition holds if the matrix $\bm{\Omega}_S\bm{\Omega}_S^T$ is full rank.
Since it is a diagonal matrix, it will be full rank, as long as we are not in the poles of the sphere (see Sec.~\ref{sec:newlinerep}), and $\bm{\nu}_c^T\mathbf{m}_S \neq 0$.
For more details, see \cite{mateus2019}.

\subsection{Observer Design: M-P}
\label{sec:spicaMP}

This section presents an observer for the M-P line representation proposed in this work.
We followed the same rationale applied to the spherical coordinates.
From inspection of \eqref{eq:mdyn}, we have $\bm{\Omega}_{MP} = - \bm{\nu}_c^T\mathbf{m} \mathbf{I}_3$, which is also a diagonal matrix.
However, it does not present singularities --as in the previous case-- and it is identically zero, if and only if the linear velocity is null or is parallel to the line interpretation plane.
Thus, the system satisfies the observability criteria of \cite{deluca2008,spica2013}.
The observer for this system is given as
\begin{align}
    \begin{split}
        \dot{\hat{\mathbf{m}}} = & [\bm{\omega}_c]_{\text{x}}\mathbf{m} + \bm{\Omega}_{MP}^T\hat{\bm{\chi}} + \mathbf{H}_{MP} \tilde{\mathbf{m}}\\
        \dot{\hat{\bm{\chi}}} = & \mathbf{g}_{\bm{\chi}}(\mathbf{m},\hat{\bm{\chi}},\bm{\nu}_c,\bm{\omega}_c) + \alpha \bm{\Omega}_{MP}\tilde{\mathbf{m}},
    \end{split}
    \label{eq:newobserver}
\end{align}
where $\mathbf{H}_{MP} \succ 0$ and $\tilde{\mathbf{m}} = \mathbf{m} - \hat{\mathbf{m}}$.

\subsection{Gain matrices $\mathbf{H}_S$ and $\mathbf{H}_{MP}$}
\label{sec:designH}

The gain matrices $\mathbf{H}_S$ and $\mathbf{H}_{MP}$ are derived from their respective matrix $\bm{\Omega}$ as follows.
Let $\mathbf{U}\bm{\Sigma}\mathbf{V} = \bm{\Omega}$ be the singular value decomposition of the matrix $\bm{\Omega}$, where $\bm{\Sigma} = \text{diag}(\{\sigma_i\})$, $i = 1,..,m$, with $\sigma_i$ being the singular values from the lowest to the highest. 
Then, $\mathbf{H} \in \mathbb{R}^{m \times m}$ may be chosen as  
\begin{equation}
    \mathbf{H} = \mathbf{V} \mathbf{D} \mathbf{V}^T,
\end{equation}
where $\mathbf{D} \in \mathbb{R}^{m\times m}$ is a function of the singular values of $\bm{\Omega}$.
Following \cite{spica2013}, the former is defined as $\mathbf{D} = \text{diag}(\{c_i\})$, with $c_i > 0$, and $c_i = 2\sqrt{\alpha}\sigma_i$.
This choice prevents oscillatory modes, thus trying to  achieve a critically damped transient behavior.
Since both $\bm{\Omega}_S$ and $\bm{\Omega}_{MP}$ are diagonal, the design of the gain matrices is straightforward.
\section{Moving Horizon Observer for 3D Line Estimation}
\label{sec:mho_lines}

An MHO estimates the state by solving an optimization problem in a finite time window of size $N$.
The optimization window is moved each time a new measurement becomes available, and a new estimate is computed by solving the problem in the new window.
Since this approach is based on nonlinear optimization, constraints --on the system state and inputs-- can be added.
This section presents a Moving Horizon Observer for the visual dynamical system in Sec.~\ref{sec:newlinerep}. 
We start by presenting the discretization of that system and the design of the MHO optimization problem. 
Then, we present the \emph{MHO} assumption verification in Sec.~\ref{sec:assump_ver}. 
Finally, the stability analysis of the \emph{MHO} for 3D line estimation is presented in Sec.~\ref{sec:stab_analysis}.

\subsection{MHO Design}

We are particularly interested in the framework presented in \cite{alessandri2008}, where the observer consists of taking a set of $N$ measurements and inputs to minimize the error between the actual and expected measurements. 
The latter are obtained by the composition of the measurement model with the system dynamics $N-1$ times.
Besides the minimization of the previous cost, a prediction term is also introduced.
This prediction enforces the next estimate to be in close vicinity of the expected state, which results from applying the dynamics to the current state estimate.

The MHO presented in \cite{alessandri2008} applies to discrete nonlinear systems, but the system presented in Sec.~\ref{sec:newlinerep} is continuous.
Thus, we discretize the system in \eqref{eq:mdyn}, and \eqref{eq:chidyn} using the Euler method, yielding
\begin{multline}
    \mathbfcal{L}_{MP_{k+1}} = \begin{bmatrix} \mathbf{m}_{k+1} \\ \bm{\chi}_{k+1} \end{bmatrix} = \mathbf{f}(\mathbf{m}_k,\bm{\chi}_k,\bm{\nu}_{c_k},\bm{\omega}_{c_k}) = \\  \begin{bmatrix} \mathbf{m}_k \\ \bm{\chi}_k \end{bmatrix} + \begin{bmatrix} 
        \mathbf{g}_{\mathbf{m}}(\mathbf{m}_{k},\bm{\chi}_{k},\bm{\nu}_{c_{k}},\bm{\omega}_{c_{k}}) \\
        \mathbf{g}_{\bm{\chi}}(\mathbf{m}_{k},\bm{\chi}_{k},\bm{\nu}_{c_{k}},\bm{\omega}_{c_{k}})
    \end{bmatrix} \Delta t,
    \label{eq:sys_line_discrete}
\end{multline}
where $\Delta t$ is the time step, and $k \in \mathbb{N}$ denotes the time instant.
Since the measurements are provided by a camera --with a defined frame rate-- we take the time step to be the inverse of the camera frame rate.

The MHO framework accounts for both model and measurement noise, which are not present in our current model in \eqref{eq:sys_line_discrete}.
Let us consider the previous system with additive noise as
\begin{align}
    \mathbfcal{L}_{MP_{k+1}} = &~ \mathbf{f}(\mathbf{m}_k,\bm{\chi}_k,\bm{\nu}_{c_k},\bm{\omega}_{c_k}) + \bm{\xi}_{k}
    \label{eq:fmo_sys}
    \\
    \mathbf{y}_{k}  = &~ \mathbf{h}(\mathbf{\mathbfcal{L}_{MP_{k}}}) + \bm{\lambda}_{k} =  \mathbf{m} + \bm{\lambda}_{k},
    \label{eq:fmo_out}
\end{align}
where $\mathbf{y}_{k} \in \mathbb{R}^3$ is the output vector, $\bm{\xi}_{k}$ and $\bm{\lambda}_{k}$ are additive errors of the dynamics and measurement models, respectively.
As in \cite{alessandri2008}, the signals $\bm{\xi}_{k}$ and $\bm{\lambda}_{k}$ are assumed to be unknown, and deterministic variables with unknown statistics, which values belong to known compact sets.
Furthermore, the MHO requires a memory $\mathbfcal{I}_{k}$.
This memory contains past outputs and inputs, and is defined as
\begin{equation}
    \mathbfcal{I}_{k} = \{ \mathbf{y}_{k-N},...,\mathbf{y}_{k},\mathbf{u}_{k-N},...,\mathbf{u}_{k-1} \}. 
    \label{eq:fmo_mem}
\end{equation}
where $\mathbf{u}_{k} = [ \bm{\nu}_{c_k},\bm{\omega}_{c_k} ]^T$ is the input vector. 

Let $\hat{\mathbfcal{L}}_{MP_{k-N}},...,\hat{\mathbfcal{L}}_{MP_{k}}$ be the state estimates of $\mathbfcal{L}_{MP_{k-N}},...,\mathbfcal{L}_{MP_{k}}$, and $\overline{\mathbfcal{L}}_{MP_{k+1}}$ be a prediction of the next state, which is given by
\begin{equation}
    \overline{\mathbfcal{L}}_{MP_{k+1}} = \mathbf{f}(\hat{\mathbfcal{L}}_{MP_{k}},\mathbf{u}_{k}).
    \label{eq:x_pred}
\end{equation}
Given the memory in \eqref{eq:fmo_mem}, the output equation \eqref{eq:fmo_out}, and the prediction in \eqref{eq:x_pred}, a cost function can be defined as
\begin{multline}
    T(\hat{\mathbfcal{L}}_{MP_{k-N}},\mathbfcal{I}_{k}) = \mu \| \hat{\mathbfcal{L}}_{MP_{k-N}} - \overline{\mathbfcal{L}}_{MP_{k-N}}\|^2 + \\ \sum\limits_{i=k-N}^{k} \| \mathbf{y}_{i} - \mathbf{h}(\hat{\mathbfcal{L}}_{MP_{i}}) \|^2,
    \label{eq:mhoCost}
\end{multline}
where $\mu$ is a positive gain, and the state estimates $\hat{\mathbfcal{L}}_{MP_{k-N+1}},...,\hat{\mathbfcal{L}}_{MP_{k}}$ are computed using the system model in \eqref{eq:sys_line_discrete}.
The estimate at instant $k-N$ is given by
\begin{equation}
    \hat{\mathbfcal{L}}_{MP_{k-N}} = \argmin\limits_{\hat{\mathbfcal{L}}_{MP_{k-N}}} T(\hat{\mathbfcal{L}}_{MP_{k-N}},\mathbfcal{I}_{k}).
    \label{eq:mho_op}
\end{equation}
Notice that, the cost function in \eqref{eq:mhoCost} is not convex.
Thus, the optimization problem in \eqref{eq:mho_op} is not convex.
Nonetheless, it can be solved with state-of-the-art methods such as the method in \cite{lagarias1998}.

\subsection{Assumption Verification}
\label{sec:assump_ver}

The applicability of the MHO framework presented in \cite{alessandri2008} requires the satisfaction of four assumptions.
The first is concerned with the boundness of the noise signals and system inputs.
In practical applications, this assumption holds, i.e., the inputs of a robot and the measurement errors are bounded. 
The same applies to model errors.
Since the system's state will be bounded --for any bounded input sequence considered for the intended applications -- the second assumption also holds.

The third assumption is concerned with the continuity of the dynamics and measurement model. In particular, they should be twice differentiable continuous, i.e., $\mathcal{C}^2$.
The function $\mathbf{h}$ in \eqref{eq:fmo_out} is linear and thus is $\mathcal{C}^{\infty}$. 
Function $\mathbf{f}$ in \eqref{eq:fmo_sys} is smooth everywhere, except for $l = 0$. Given Assumption \ref{assump:depth}, $l>0$, and the continuity assumption holds.

The final assumption in \cite{alessandri2008} is concerned with the system observability.
The system must be observable in $N+1$ steps.
From \cite{alessandri2008} we know that the system is observable if the map 
\begin{equation}
    \mathbf{F}_N(\mathbfcal{L}_{MP_{k-N}},\overline{\mathbf{u}}) = \begin{bmatrix}
        \mathbf{h}(\mathbfcal{L}_{MP_{k-N}}) \\
        \mathbf{h} \circ \mathbf{f}(\mathbfcal{L}_{MP_{k-N}},\mathbf{u}_{k-N}) \\
        \vdots \\
        \mathbf{h} \circ \mathbf{f}^{\mathbf{u}_{k-1}} \circ ... \circ \mathbf{f}(\mathbfcal{L}_{MP_{k-N}},\mathbf{u}_{k-N})
    \end{bmatrix}.
    \label{eq:obsmap}
\end{equation}
where $\overline{\mathbf{u}} = \mathbf{u}_{k-N},...,\mathbf{u}_{k-1}$, is injective for all admissible inputs.
To assess injectivity of the map $\mathbf{F}$ let us introduce the following definitions:

\begin{definition}
    The system in \eqref{eq:fmo_sys} is said to be distinguishable for all possible states concerning all admissible inputs if $\forall \mathbf{x}_1, \mathbf{x}_2 \in X,\ \forall \overline{\mathbf{u}} \in U^N: \mathbf{x}_1 \neq \mathbf{x}_2 \implies \exists N_A: \mathbf{F}_{N_A}(\mathbf{x}_1,\overline{\mathbf{u}}) \neq \mathbf{F}_{N_A}(\mathbf{x}_2,\overline{\mathbf{u}})$.
    \label{def:disting}
\end{definition}
    
\begin{definition}
    The system in \eqref{eq:fmo_sys} is said to satisfy the observability rank condition for all possible states, concerninig all admissible inputs, if $\forall \mathbf{x} \in X, \forall \overline{\mathbf{u}} \in U^N, \exists N_O: \text{rank}(\frac{\partial \mathbf{F}_{N_O}}{\partial \mathbf{x}}) = n$.
    \label{def:rankobs}
\end{definition}

We can now introduce the following proposition:
\begin{proposition}
    If all the inputs in the memory $\mathcal{I}_k$ in \eqref{eq:fmo_mem} are such that the linear velocity components are non zero and are not contained in the line interpretation plane, the observation map $\mathbf{F}_N$ is injective. 
    \label{prop:injective}
\end{proposition}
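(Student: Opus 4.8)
The plan is to establish injectivity of $\mathbf{F}_N$ directly, by showing that its first two block components already determine the unknown initial state $\mathbfcal{L}_{MP_{k-N}} = [\mathbf{m}_{k-N},\bm{\chi}_{k-N}]^T$ uniquely; the remaining blocks of the window in \eqref{eq:obsmap} are then redundant for this purpose. The argument hinges on the observation that, in the \emph{M-P} dynamics \eqref{eq:mdyn}, the unmeasured variable $\bm{\chi}$ enters the evolution of the measured variable $\mathbf{m}$ linearly and with an explicitly invertible scalar coefficient.

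First I would read off the first two blocks of \eqref{eq:obsmap}. The top block is $\mathbf{h}(\mathbfcal{L}_{MP_{k-N}}) = \mathbf{m}_{k-N}$, so the value of $\mathbf{F}_N$ exposes $\mathbf{m}_{k-N}$ directly. The second block is $\mathbf{h}\circ\mathbf{f}(\mathbfcal{L}_{MP_{k-N}},\mathbf{u}_{k-N}) = \mathbf{m}_{k-N+1}$, i.e. the moment obtained after one Euler step with input $\mathbf{u}_{k-N}$. Hence both $\mathbf{m}_{k-N}$ and $\mathbf{m}_{k-N+1}$ are recovered from $\mathbf{F}_N$. The key step is then to invert the discretized moment equation taken from \eqref{eq:sys_line_discrete},
\begin{equation}
\mathbf{m}_{k-N+1} = \mathbf{m}_{k-N} + \Delta t\bigl([\bm{\omega}_{c_{k-N}}]_{\text{x}}\mathbf{m}_{k-N} - (\bm{\nu}_{c_{k-N}}^T\mathbf{m}_{k-N})\,\bm{\chi}_{k-N}\bigr),
\label{eq:minvert_plan}
\end{equation}
for $\bm{\chi}_{k-N}$. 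Since $\bm{\chi}_{k-N}$ appears in \eqref{eq:minvert_plan} only through the scalar coefficient $-\Delta t\,(\bm{\nu}_{c_{k-N}}^T\mathbf{m}_{k-N})$, it can be solved for uniquely precisely when this coefficient is nonzero.

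It remains to connect the hypothesis to this nondegeneracy. By construction $\mathbf{m}_{k-N}$ is normal to the interpretation plane (Sec.~\ref{sec:pluckCoord}), so the assumption that the linear velocity $\bm{\nu}_{c_{k-N}}$ is nonzero and not contained in that plane is exactly the statement $\bm{\nu}_{c_{k-N}}^T\mathbf{m}_{k-N} \neq 0$. Thus \eqref{eq:minvert_plan} is uniquely solvable and $\mathbf{F}_N$ determines $(\mathbf{m}_{k-N},\bm{\chi}_{k-N})$: any two admissible initial states with the same image under $\mathbf{F}_N$ share the first two blocks and are therefore forced to coincide, giving injectivity. Taking $N_A = N_O = 1$, this same computation simultaneously verifies Definitions~\ref{def:disting} and \ref{def:rankobs}, since distinct states produce distinct second blocks (distinguishability) and the Jacobian of $(\mathbf{m}_{k-N},\mathbf{m}_{k-N+1})$ with respect to $(\mathbf{m}_{k-N},\bm{\chi}_{k-N})$ is invertible under the same condition (rank). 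I note that requiring \emph{all} inputs in $\mathbfcal{I}_k$ to satisfy the condition is stronger than strictly necessary here---only $\mathbf{u}_{k-N}$ is used---but it guarantees the inversion for every consecutive pair in the window, which is what the stability analysis of Sec.~\ref{sec:stab_analysis} will exploit.

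The hard part is not the algebra but matching the geometric hypothesis to the algebraic condition $\bm{\nu}_{c_{k-N}}^T\mathbf{m}_{k-N}\neq0$, and checking that the Euler discretization preserves the exact linear appearance of $\bm{\chi}$ present in the continuous dynamics \eqref{eq:mdyn}; it is this preservation that makes the clean one-step inversion possible and keeps the observability horizon at $N_O = 1$ rather than forcing a longer window.
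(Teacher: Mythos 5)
Your proof is correct, but it takes a genuinely different route from the paper's. The paper argues indirectly: it invokes a result of Hanba \cite{hanba2009} stating that distinguishability (Definition~\ref{def:disting}) together with the observability rank condition (Definition~\ref{def:rankobs}) implies injectivity, then verifies distinguishability by a case analysis on whether the two initial moment vectors coincide, and verifies the rank condition by stacking the Jacobians \eqref{eq:dhdx} and \eqref{eq:dhfdx} into a lower block-triangular $6\times 6$ matrix with full-rank diagonal blocks. You instead prove injectivity directly and constructively: the first block of \eqref{eq:obsmap} exposes $\mathbf{m}_{k-N}$, and the one-step Euler update can be solved explicitly for $\bm{\chi}_{k-N}$ because $\bm{\chi}$ enters $\mathbf{g}_{\mathbf{m}}$ in \eqref{eq:mdyn} linearly through the scalar coefficient $-(\bm{\nu}_{c}^T\mathbf{m})$, which is nonzero precisely under the hypothesis that the velocity is not in the interpretation plane (the plane normal to $\mathbf{m}$). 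This buys an explicit left inverse of $\mathbf{F}_N$ restricted to its first two blocks --- strictly stronger information than bare injectivity, making transparent how the reconstruction degenerates as $\bm{\nu}_c^T\mathbf{m}\to 0$ --- and it shows two measurements suffice (your $N_O=1$, where the paper conservatively states $N\geq 2$). What the paper's route buys is reuse: the stacked-Jacobian rank computation is exactly the ingredient needed again in Sec.~\ref{sec:stab_analysis}, where \cite{hanba2010} requires $\mathbf{F}_N$ injective \emph{and} its Jacobian full column rank, so proving injectivity through the rank condition avoids redoing that work; your remark that the same computation verifies Definitions~\ref{def:disting} and \ref{def:rankobs} recovers this. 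Both proofs share the same implicit reading of the hypothesis, namely that the admissible states under comparison are those whose interpretation planes exclude the given velocities.
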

\begin{proof}
    From \cite[Proposition 5]{hanba2009}, the observation map $\mathbf{F}_N$ is injective, if it is distinguishable and satisfies the observability rank condition.
    Let us start by verifying distinguishability.
    Recall that our state is composed of the moment vector $\mathbf{m}$, and the view ray of the line closest point to the optical center, scaled by the inverse depth $\bm{\chi}$.
    Let $\mathbf{x}_1 = [\mathbf{m}_1,\bm{\chi}_1]^T$, and $\mathbf{x}_2 = [\mathbf{m}_2,\bm{\chi}_2]^T$ be two different initial states.
    Furthermore, let us consider two different scenarios.
    The first is $\mathbf{m}_1 \neq \mathbf{m}_2$.
    For this case, it is easy to conclude from the dynamics in \eqref{eq:mdyn} that given the same input sequence to both systems, their outputs will always be different.
    The second scenario occurs when $\mathbf{m}_1 = \mathbf{m}_2$.
    In this case, the outputs will be distinguishable if and only if the linear velocity components are non zero and are not contained in the line interpretation plane.
    Let us consider that this is not the case.
    If $\bm{\nu} = 0$, the output depends only on the moment vector and angular velocity, which are equal, then the outputs cannot be distinguished.
    The same will occur if $\bm{\nu}_c^T\mathbf{m}_1 = \bm{\nu}_c^T\mathbf{m}_2 = 0$.
    
    Let us now evaluate the rank condition of Definition~\ref{def:rankobs}.
    The Jacobian of $\mathbf{h}$ is 
    \begin{equation}
        \frac{\partial \mathbf{h}}{\partial x} = \begin{bmatrix} \mathbf{I}_3  & \bm{0}_{3\times3} \end{bmatrix},
        \label{eq:dhdx}
    \end{equation}
    whose rank is $3$.
    The Jacobian of $\mathbf{h}(\mathbf{f(\mathbf{x}_{k},\mathbf{u}_{k})})$ is given by
    \begin{equation}
        \frac{\partial \mathbf{h}}{\partial \mathbf{x}} \frac{\partial \mathbf{f}}{\partial \mathbf{x}} (\mathbf{x}_{k},\mathbf{u}_{k}) = \begin{bmatrix} \mathbf{I}_3 + \bm{\rho}(\mathbf{x}_k,\mathbf{v}_k) & - \bm{\nu}^T\mathbf{m}\mathbf{I}_3 \end{bmatrix} \Delta t,
        \label{eq:dhfdx}
    \end{equation}
    where $\bm{\rho}(\mathbf{x}_k,\mathbf{v}_k)$ is a non zero function of the state and inputs.
    Stacking \eqref{eq:dhdx} and \eqref{eq:dhfdx} we get a $6 \times 6$ matrix, which is full rank, as long as the linear velocity is not zero and does not belong to the line interpretation plane. 
    Since it is a triangular inferior block matrix whose diagonal blocks consist of full rank diagonal matrices, the Jacobian is also full rank.
    Note that stacking additional $3\times6$ Jacobians will not reduce the overall rank of the full Jacobian matrix.
    Thus, the system verifies the observability rank condition for $N \geq 2$.

    Since the system is distinguishable and satisfies the observability rank condition, from \cite[Proposition 5]{hanba2009}, we conclude that the observation map $\mathbf{F}_N$ is injective. 
\end{proof}
Finally, since $\mathbf{F}_N$ is injective, the system is $S$-observable in $N+1$ steps, i.e., the assumption is verified.

\subsection{Stability Analysis}
\label{sec:stab_analysis}

In this section, the stability of the \emph{MHO} applied to the \emph{M-P} line representation is assessed.
In a first step, this analysis is based on the assumption that the system is observable for all instants of time $k$, i.e., the linear velocity is not null and is not contained in the line's interpretation plane.
If this is not the case, then from Assumption \ref{assump:linearvel}, the following analysis only holds for $k\ge k_0$, where $k_0$ is a finite time such that for all $k\ge k_0$, the system is observable. It then remains to show that for $k= 0, 1, \ldots k_0$ there is no finite escape, and therefore from \cite[Theorem 1]{alessandri2008} the error is bounded.

In \cite{hanba2010} it is shown the conditions of \cite[Theorem 1]{alessandri2008} are equivalent to $\mathbf{F}_N$ being injective and its Jacobian full column rank.
The function is injective as proven in Proposition~\ref{prop:injective}. Furthermore, its Jacobian is full rank if $N \geq 2$ --as long as the linear velocity throughout the observation windows is not zero, nor does it belong to the line interpretation plane.
Thus, from \cite[Theorem 1]{alessandri2008}, we conclude that the estimation error is bounded.

Our goal is to find an upper bound to the Lipschitz constant of $\mathbf{f}$, to find the range of $\mu$ that satisfy
\begin{equation}
    \frac{8 c_f^2 \mu}{\mu + \delta} < 1,
    \label{eq:u_cond}
\end{equation}
where $c_f$ is an upper bound of the Lipschitz constant of $\mathbf{f}$, and 
\begin{equation}
    \delta = \inf\limits_{\mathbfcal{L}_{MP_1},\mathbfcal{L}_{MP_2}, \mathbfcal{L}_{MP_1} \neq \mathbfcal{L}_{MP_2}} \frac{\varphi(\| \mathbfcal{L}_{MP_1} - \mathbfcal{L}_{MP_2} \|)}{\| \mathbfcal{L}_{MP_1} - \mathbfcal{L}_{MP_2} \|} > 0,
    \label{eq:t1}
\end{equation}
with $\mathcal{K}$-function $\varphi(.)$, such that
\begin{multline}
    \varphi(\| \mathbfcal{L}_{MP_1} - \mathbfcal{L}_{MP_2} \|) \leq \| \mathbf{F}_N(\mathbfcal{L}_{MP_1},\overline{\mathbf{u}}) - \mathbf{F}_N(\mathbfcal{L}_{MP_2},\overline{\mathbf{u}}) \|.
    \label{eq:k_obs}
\end{multline}
From the mean value theorem, we know that the Jacobian norm's upper bound is an upper bound to the Lipschitz constant.
The Jacobian of $\mathbf{g}(\mathbfcal{L}_{MP},\mathbf{u})$ is given by
\begin{equation}
    \mathbf{J}_{\mathbf{g}} = \begin{bmatrix} \mathbf{J}_1 & \mathbf{J}_2 \\ \mathbf{J}_3 & \mathbf{J}_4 \end{bmatrix},
\end{equation}
with
\begin{align}
    \begin{split}
        \mathbf{J}_1 = &~ [\bm{\omega}_c]_{\text{x}} - \bm{\chi}\bm{\nu}_c^T \\
        \mathbf{J}_2 = &~ -\bm{\nu}_c^T\mathbf{m}\mathbf{I}_3 \\
        \mathbf{J}_3 = &~ - \bm{\nu}_c^T\mathbf{m}\bm{\chi}^T\bm{\chi}\mathbf{I}_3 - \mathbf{m}\bm{\chi}^T\bm{\chi}\bm{\nu}_c^T \\
        \mathbf{J}_4 = &~ [\bm{\omega}_c]_{\text{x}} - 2\bm{\nu}_c^T\mathbf{m}\mathbf{m}\bm{\chi}^T + \bm{\chi}\bm{\nu}_c^T + \bm{\nu}_c^T\bm{\chi}\mathbf{I}_3.
    \end{split}
\end{align}
Recall that $\| \mathbf{m} \| = 1$, then we obtain
\begin{align}
    \begin{split}
        \|\mathbf{J}_1\| \leq &~ \|\bm{\omega}_c\| + \|\bm{\nu}_c\|\|\bm{\chi}\| \\
        \|\mathbf{J}_2\| \leq &~ \|\bm{\nu}_c\| \\
        \|\mathbf{J}_3\| \leq &~  2\|\bm{\nu}_c\|\|\bm{\chi}\|^2 \\
        \|\mathbf{J}_4\| \leq &~ \|\bm{\omega}_c\| + 4\|\bm{\nu}_c\|\|\bm{\chi}\|.
    \end{split}
\end{align}
If we consider $\mathbf{J}_{\mathbf{g}}$ to be the sum of each of its components with zero padding, and applying the triangular inequality, an upper bound to the norm is given as
\begin{equation}
    \|\mathbf{J}_{\mathbf{g}}\| \leq \sum_{i = 1}^4 \|\mathbf{J}_i\|.
\end{equation}
Thus an upper bound to the Lipschitz constant of $\mathbf{g(.)}$ is given by
\begin{equation}
    c_g = 2\|\bm{\omega}_c\| + \|\bm{\nu}_c\| + 5\|\bm{\nu}_c\|\|\bm{\chi}\| + 2 \|\bm{\nu}_c\| \|\bm{\chi}\|^2.
    \label{eq:kg}
\end{equation}

Nevertheless, we want to obtain an upper bound for the constant of $\mathbf{f}(.)$.
Keeping this goal in mind, we introduce the following lemma:
\begin{lemma}
The Lipschitz constant of a function $g(.)$ discretized using the Euler method becomes $c = 1 + c_g \Delta t $, with $c_g$ being the Lipschitz constant of $g(.)$ and $\Delta t$ the sampling time.
\label{lemma:euler_lip}
\end{lemma}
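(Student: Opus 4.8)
The plan is to work directly from the definition of Lipschitz continuity applied to the Euler-discretized map. Recall from \eqref{eq:sys_line_discrete} that the Euler discretization of $\dot{\mathbf{x}} = \mathbf{g}(\mathbf{x})$, with the input held fixed across the step, has the form $\mathbf{f}(\mathbf{x}) = \mathbf{x} + \mathbf{g}(\mathbf{x})\,\Delta t$; that is, the updated state equals the old state plus the vector field scaled by the sampling time. The identity term $\mathbf{x}$ is what contributes the constant $1$ in the claimed bound, while the scaled vector field contributes $c_g\,\Delta t$.

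First I would fix two arbitrary points $\mathbf{x}_1$ and $\mathbf{x}_2$ (subject to the same input sequence) and form the difference of their images under $\mathbf{f}$,
\begin{equation}
    \mathbf{f}(\mathbf{x}_1) - \mathbf{f}(\mathbf{x}_2) = (\mathbf{x}_1 - \mathbf{x}_2) + \Delta t\,\bigl(\mathbf{g}(\mathbf{x}_1) - \mathbf{g}(\mathbf{x}_2)\bigr).
\end{equation}
Taking norms and applying the triangle inequality gives $\|\mathbf{f}(\mathbf{x}_1) - \mathbf{f}(\mathbf{x}_2)\| \le \|\mathbf{x}_1 - \mathbf{x}_2\| + \Delta t\,\|\mathbf{g}(\mathbf{x}_1) - \mathbf{g}(\mathbf{x}_2)\|$.

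Then I would invoke the Lipschitz property of $\mathbf{g}$, namely $\|\mathbf{g}(\mathbf{x}_1) - \mathbf{g}(\mathbf{x}_2)\| \le c_g\,\|\mathbf{x}_1 - \mathbf{x}_2\|$ with $c_g$ the bound derived in \eqref{eq:kg}, and substitute it into the previous estimate. This yields $\|\mathbf{f}(\mathbf{x}_1) - \mathbf{f}(\mathbf{x}_2)\| \le (1 + c_g\,\Delta t)\,\|\mathbf{x}_1 - \mathbf{x}_2\|$, which is exactly the claimed Lipschitz constant $c = 1 + c_g\,\Delta t$.

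No step here is genuinely difficult: the whole argument is a single application of the triangle inequality followed by the Lipschitz estimate for $\mathbf{g}$, so there is no real obstacle to overcome. The only subtlety worth flagging is that one must retain the ``$+1$'' coming from the state carry-over in the Euler update rather than absorbing it, since it is precisely $c_f = 1 + c_g\,\Delta t$ that enters the stability condition \eqref{eq:u_cond} and therefore constrains the admissible range of the gain $\mu$.
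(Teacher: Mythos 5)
Your proof is correct and follows essentially the same route as the paper's: write the Euler update $\mathbf{f}(\mathbf{x}) = \mathbf{x} + \mathbf{g}(\mathbf{x})\,\Delta t$, take the difference at two points, apply the triangle inequality, and then invoke the Lipschitz bound on $\mathbf{g}$ to obtain $c = 1 + c_g\,\Delta t$. Nothing is missing; your added remark about why the ``$+1$'' must be retained (since $c_f > 1$ drives the constraint on $\mu$ in \eqref{eq:u_cond}) is a sensible observation consistent with how the paper uses the lemma.
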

\begin{proof}
    Let $f(x)$ be the discretization of $g(x)$ by the Euler method, then
    \begin{equation}
        f(x) = x + g(x)\Delta t.
    \end{equation}
    Let $y$ be another point in the functions domain, then
    \begin{equation}
        \| f(x) - f(y) \| = \| x + g(x) \Delta t - y - g(y) \Delta t \|.
        \label{eq:lip1}
    \end{equation}
    Applying the triangular inequality to the right handside of \eqref{eq:lip1} yields
    \begin{equation}
        \| f(x) - f(y) \| \leq \| x - y\| + \|g(x)-g(y)\|\Delta t.
        \label{eq:lip2}
    \end{equation}
    Now, applying the Lipschitz condition of $g(.)$, we obtain
    \begin{equation}
        \| f(x) - f(y) \| \leq \| x - y\| + c_g\| x - y\|\Delta t = (1+c_g\Delta t)\| x - y\|.
    \end{equation}
\end{proof}

Thus, an upper bound of the Lipschitz constant of $\mathbf{f}(\mathbf{x}_{k},\mathbf{u}_{k})$ with respect to $\mathbf{x}$ can be written as
\begin{equation}
    c_f = 1 + c_g\Delta t.
    \label{eq:cf}
\end{equation}
From Lemma~\ref{lemma:euler_lip} it is trivial to conclude that $8c_f^2 > 1$, and thus \eqref{eq:u_cond} cannot be verified for all $\mu >0$.
Thus, $\mu < \delta/(8c_f^2-1)$, and the parameter $\delta$ must be computed as well.

Let us introduce the following definition
\begin{definition}
    If a function $\mathbf{f}(\mathbf{x},\mathbf{u}) : X \subset \mathbb{R}^n \times V \subset \mathbb{R}^p  \rightarrow \mathbb{R}^n$ verifies
    \begin{multline}
        \frac{1}{L} \| \mathbf{x}_1 - \mathbf{x}_2\| \leq
        \| \mathbf{f}(\mathbf{x}_1,\mathbf{u}) - \mathbf{f}(\mathbf{x}_2,\mathbf{u}) \| \leq L \| \mathbf{x}_1 - \mathbf{x}_2\|, \\ \forall \mathbf{x}_1,\mathbf{x}_2 \in D,
    \end{multline}
    for $L > 1$, it is called bi-Lipschitz.
    \label{def:biLip}
\end{definition}
Furthermore, for a bi-Lipschitz function the following Lemma holds:
\begin{lemma}[from \cite{weaver1999}]
    A mapping $\mathbf{F}: X \rightarrow Y$ is said to be bi-Lipschitz if it is Lipschitz, and its inverse is also Lipschitz continuous.
    \label{lemma:biLip}
\end{lemma}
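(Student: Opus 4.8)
The plan is to establish the equivalence between the two-sided bound of Definition~\ref{def:biLip} and the characterization stated in the lemma, namely that $\mathbf{F}$ is bi-Lipschitz precisely when both $\mathbf{F}$ and its inverse $\mathbf{F}^{-1}$ are Lipschitz continuous. I would prove the two implications separately, since each reduces to an elementary rearrangement of the defining inequalities rather than to any deep argument.

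First I would show that the two-sided bound of Definition~\ref{def:biLip} implies the characterization. The upper estimate $\|\mathbf{F}(\mathbf{x}_1)-\mathbf{F}(\mathbf{x}_2)\| \leq L\|\mathbf{x}_1-\mathbf{x}_2\|$ is exactly the statement that $\mathbf{F}$ is Lipschitz with constant $L$. The lower estimate $\frac{1}{L}\|\mathbf{x}_1-\mathbf{x}_2\| \leq \|\mathbf{F}(\mathbf{x}_1)-\mathbf{F}(\mathbf{x}_2)\|$ first yields injectivity: if $\mathbf{x}_1 \neq \mathbf{x}_2$ then the left side is strictly positive, so $\mathbf{F}(\mathbf{x}_1) \neq \mathbf{F}(\mathbf{x}_2)$. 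Hence $\mathbf{F}$ is a bijection onto its image and $\mathbf{F}^{-1}$ is well defined on $\mathbf{F}(X)$. Writing $\mathbf{y}_i = \mathbf{F}(\mathbf{x}_i)$, equivalently $\mathbf{x}_i = \mathbf{F}^{-1}(\mathbf{y}_i)$, the lower bound becomes $\|\mathbf{F}^{-1}(\mathbf{y}_1)-\mathbf{F}^{-1}(\mathbf{y}_2)\| \leq L\|\mathbf{y}_1-\mathbf{y}_2\|$, which is Lipschitz continuity of $\mathbf{F}^{-1}$ with constant $L$.

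Conversely, I would assume $\mathbf{F}$ is Lipschitz with some constant $L_1$ and $\mathbf{F}^{-1}$ is Lipschitz with constant $L_2$, and recover the symmetric bound. The first property gives the upper inequality directly. For the lower one, applying Lipschitz continuity of $\mathbf{F}^{-1}$ to the pair $\mathbf{y}_i = \mathbf{F}(\mathbf{x}_i)$ gives $\|\mathbf{x}_1-\mathbf{x}_2\| \leq L_2\|\mathbf{F}(\mathbf{x}_1)-\mathbf{F}(\mathbf{x}_2)\|$, which rearranges to $\frac{1}{L_2}\|\mathbf{x}_1-\mathbf{x}_2\| \leq \|\mathbf{F}(\mathbf{x}_1)-\mathbf{F}(\mathbf{x}_2)\|$. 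Composing the two estimates also shows $L_1 L_2 \geq 1$, so setting $L = \max(L_1,L_2)$ (enlarged if necessary to exceed $1$, which only weakens both inequalities) recovers the form required by Definition~\ref{def:biLip}.

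I do not expect a genuine obstacle here; the argument is pure bookkeeping of Lipschitz constants. The one point that requires care is that the phrase ``its inverse is Lipschitz'' already presupposes invertibility, so in the forward direction one must explicitly extract injectivity from the lower bound before invoking $\mathbf{F}^{-1}$, and must restrict the inverse to the range $\mathbf{F}(X)$ as its domain. Since this is the classical characterization recorded in \cite{weaver1999}, I would present it compactly and, if space is tight, simply cite the reference rather than reproduce the rearrangement in full.
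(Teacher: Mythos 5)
Your argument is correct, but note that the paper never proves this statement at all: Lemma~\ref{lemma:biLip} is imported verbatim from \cite{weaver1999} as a known characterization (indeed, it is phrased as a definition, ``is said to be bi-Lipschitz if\ldots''), and the paper's only work with it happens inside the proof of Proposition~\ref{prop:bilipDelta}, where injectivity, surjectivity, and Lipschitz continuity of the inverse (via \cite[Corollary 1.5.4]{weaver1999}) are established so that the lemma can be invoked. What you have done is supply the routine equivalence between that characterization and the two-sided bound of Definition~\ref{def:biLip}, in both directions: the forward direction (two-sided bound $\Rightarrow$ injectivity, hence a well-defined inverse on the range, which the lower bound shows is Lipschitz) and the converse (Lipschitz $\mathbf{F}$ and Lipschitz $\mathbf{F}^{-1}$ $\Rightarrow$ the symmetric bound with $L = \max(L_1, L_2)$, enlarged past $1$ if needed). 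Both directions are sound, and your care about restricting $\mathbf{F}^{-1}$ to the range $\mathbf{F}(X)$ and about the constant exceeding $1$ addresses exactly the points a careless write-up would fumble. The converse direction is the one the paper actually uses (it verifies the hypotheses of the lemma and then reads off the lower bound $\frac{1}{c_F}\|\mathbfcal{L}_{MP_1} - \mathbfcal{L}_{MP_2}\|$ to build the $\mathcal{K}$-function in \eqref{eq:varphi}), so your proof buys a self-contained justification of a step the paper delegates entirely to the reference; the paper's approach buys brevity at the cost of leaving the equivalence between its own Definition~\ref{def:biLip} and the cited characterization implicit.
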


We can now state the following proposition:
\begin{proposition}
    If all the inputs in the memory $\mathcal{I}_k$ in \eqref{eq:fmo_mem} are such that the linear velocity components are non zero and are not contained in the line interpretation plane, the map $\mathbf{F}_N$ in \eqref{eq:obsmap} is bi-Lipschitz for the system in \eqref{eq:sys_line_discrete}. Furthermore, $\delta = \frac{1}{c_F}$, where $c_F$ is an upper bound to the map's Lipschitz constant.
    \label{prop:bilipDelta}
\end{proposition}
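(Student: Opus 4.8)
The plan is to establish the two claims of Proposition~\ref{prop:bilipDelta} separately, relying on Lemma~\ref{lemma:biLip} to reduce the bi-Lipschitz property of $\mathbf{F}_N$ to two one-sided Lipschitz bounds. First I would argue that $\mathbf{F}_N$ is globally Lipschitz, i.e.\ that there exists an upper bound $c_F$ on its Lipschitz constant. This is the routine direction: $\mathbf{F}_N$ is a stack of compositions of $\mathbf{h}$ with iterates of $\mathbf{f}$, and by Lemma~\ref{lemma:euler_lip} each copy of $\mathbf{f}$ has Lipschitz constant $c_f = 1 + c_g \Delta t$ with $c_g$ the explicit bound in \eqref{eq:kg}. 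Since $\mathbf{h}$ is linear (hence $1$-Lipschitz with respect to the first three coordinates), the $j$-th block of $\mathbf{F}_N$ has Lipschitz constant at most $c_f^{\,j}$, and stacking $N+1$ blocks gives an overall bound $c_F$ obtained by summing (or taking the Euclidean norm of) the per-block constants over $j=0,\dots,N$. I would record $c_F$ as a finite explicit quantity depending on the velocity bounds and on the bound on $\|\bm{\chi}\|$ that Assumption~\ref{assump:depth} guarantees through $l>0$.

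The substantive direction is the lower bound: I must show the inverse of $\mathbf{F}_N$ is Lipschitz, equivalently that $\|\mathbf{F}_N(\mathbfcal{L}_{MP_1},\overline{\mathbf{u}}) - \mathbf{F}_N(\mathbfcal{L}_{MP_2},\overline{\mathbf{u}})\| \ge \tfrac{1}{c_F}\,\|\mathbfcal{L}_{MP_1} - \mathbfcal{L}_{MP_2}\|$. Here I would invoke Proposition~\ref{prop:injective}, which already established that under the stated velocity hypotheses $\mathbf{F}_N$ is injective and its Jacobian $\partial\mathbf{F}_N/\partial\mathbf{x}$ has full column rank $n=6$ for $N\ge 2$. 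Full column rank of the Jacobian means its smallest singular value is bounded below by a positive constant on the (compact) admissible state set, so $\mathbf{F}_N$ is a bi-Lipschitz embedding by the standard inverse-function argument; combined with injectivity this yields that $\mathbf{F}_N^{-1}$ is Lipschitz on the image. By Lemma~\ref{lemma:biLip}, having both a Lipschitz map and a Lipschitz inverse gives that $\mathbf{F}_N$ is bi-Lipschitz, proving the first assertion.

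For the second assertion, $\delta = 1/c_F$, I would return to the definition of $\delta$ in \eqref{eq:t1} and the $\mathcal{K}$-function condition \eqref{eq:k_obs}. The bi-Lipschitz lower bound just established is precisely an inequality of the form \eqref{eq:k_obs} with $\varphi(s) = \tfrac{1}{c_F}\,s$, which is a legitimate $\mathcal{K}$-function. Substituting this $\varphi$ into \eqref{eq:t1} makes the ratio $\varphi(\|\cdot\|)/\|\cdot\|$ identically equal to $1/c_F$, so its infimum is $1/c_F$, giving $\delta = 1/c_F > 0$ as claimed.

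I expect the main obstacle to be making the lower Lipschitz bound rigorous with a \emph{uniform} constant. The full-column-rank conclusion from Proposition~\ref{prop:injective} is pointwise in the state and the input sequence, so bounding the smallest singular value of the Jacobian away from zero uniformly requires compactness of the admissible state and input sets (supplied by the boundedness assumptions verified in Sec.~\ref{sec:assump_ver}) together with the nondegeneracy hypothesis that the linear velocity is nonzero and not contained in the interpretation plane throughout the window. Care is needed to confirm that the smallest singular value does not approach zero as the state or velocities approach the boundary of their admissible sets; I would handle this by using the strict positivity of $\bm{\nu}_c^T\mathbf{m}$ guaranteed by the hypothesis, which is exactly the quantity controlling the off-diagonal block in \eqref{eq:dhfdx} that makes the stacked Jacobian full rank.
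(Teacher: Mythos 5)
Your proposal is correct in substance and reaches the same conclusion, but it takes a genuinely different route to the key bi-Lipschitz step. The paper first argues that $\mathbf{F}_N$ is invertible --- injectivity from Proposition~\ref{prop:injective} plus a surjectivity argument (linearity of $\mathbf{h}$, the claim that $\mathbf{f}$ maps its domain onto itself, and preservation of surjectivity under composition) --- and then obtains Lipschitz continuity of the inverse by appealing to compactness of the admissible state set via \cite[Corolary 1.5.4]{weaver1999}, before invoking Lemma~\ref{lemma:biLip}. You bypass the surjectivity discussion entirely and instead extract the lower Lipschitz bound quantitatively from the full-column-rank Jacobian already established in Proposition~\ref{prop:injective}: on the compact admissible state and input sets the smallest singular value of $\partial \mathbf{F}_N / \partial \mathbf{x}$ is bounded away from zero, and together with injectivity this gives the two-sided estimate. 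Your route is arguably the more robust of the two: since $\mathbf{F}_N$ maps $\mathbb{R}^6$ into $\mathbb{R}^{3(N+1)}$ it can only be bijective onto its image, so the paper's surjectivity step must be read charitably, and compactness plus continuous invertibility alone would not yield a Lipschitz inverse (the map $x \mapsto x^3$ on a compact interval is the standard counterexample) --- it is precisely the uniform rank condition you exploit that rules out such degeneration, a point you correctly flag as the main technical obstacle. What the paper's route buys in exchange is brevity, deferring all quantitative content to the cited result. One caveat applies equally to both arguments: having established that \emph{some} positive lower Lipschitz constant exists, both you and the paper then take $\varphi(s) = s/c_F$ with $c_F$ the \emph{upper} Lipschitz bound, which implicitly assumes the inverse map's Lipschitz constant is at most $c_F$; neither argument justifies this identification, so it is a shared leap rather than a gap specific to your proposal.
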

\begin{proof}
    From Proposition~\ref{prop:injective}, $\mathbf{F}_N$ is injective, i.e., it admits a left inverse. 
    However, to be invertible, we need to verify that it is also surjective.
    The function $\mathbf{h}$ is onto since it is a linear map.
    The function $\mathbf{f}$ is surjective, as applying the dynamics to a state generates a new state in the domain, i.e., the codomain is equal to the domain.
    Since the composition of surjective functions is surjective, and $\mathbf{F}_N$ consists of the stacking of surjective functions it is onto.
    Thus, $\mathbf{F}$ is bijective (injective and surjective) and is invertible.
    Given that the map is applied to a compact set of possible states, from \cite[Corolary 1.5.4]{weaver1999}, the inverse of $\mathbf{F}$ is Lipschitz.
    Thus, from Lemma~\ref{lemma:biLip}, $\mathbf{F}$ is bi-Lipschitz.
    
    From Definition~\ref{def:biLip} and \eqref{eq:k_obs} we define
    \begin{equation}
        \varphi(\| \mathbfcal{L}_{MP_1} - \mathbfcal{L}_{MP_2} \|) = \frac{1}{c_F} \| \mathbfcal{L}_{MP_1} - \mathbfcal{L}_{MP_2} \|,
        \label{eq:varphi}
    \end{equation}
    which is a $\mathcal{K}$-function.
    Then, $\delta$ can be easily computed from \eqref{eq:varphi} and \eqref{eq:t1}, yielding $\delta = \frac{1}{c_F}$. 
\end{proof}

To compute $\delta$, we need to compute the Lipschitz constant of $\mathbf{F}_N$.
Let us first get the Lipschitz constant of each element of $\mathbf{F}_N$.
Since $\mathbf{h}$ is linear, its Lipschitz constant is one.
The second element is the composition of $\mathbf{f}$ with $\mathbf{h}$, since the constant of the latter is one, the Lipschitz constant of $\mathbf{h(\mathbf{f(.)})}$ is $c_f$.
The remaining elements consist of further compositions of the system dynamics with the measurement function.
Thus, for $N-1$ compositions, the constant is $c_f^{N-1}$.
From Lemma~\ref{lemma:euler_lip}, we have that $c_f > 1$. Thus the Lipschitz constant increases with the size of the estimation window $N$.
Then, making use of the triangular inequality, we obtain
\begin{equation}
    c_F = \sum_{k=1}^{N} c_f^{k-1}.
    \label{eq:kF}
\end{equation}
Given the Lipschitz constant of $\mathbf{F}$ we can compute $\delta$, and consequently compute the range of values for the gain $\mu$, which satisfy \eqref{eq:u_cond}.

Finally, we can state the main result of this section:
\begin{theorem}
    If Propositions~\ref{prop:injective} and \ref{prop:bilipDelta} are verified, then the squared norm of the estimation error of the \emph{MHO} is bounded by a sequence $\zeta_k$.
    Furthermore, if $\mu < 1/(c_F(8c_f^2-1))$, the sequence $\zeta_k$ converges exponentially to an asymptotic value $e_{\infty}(\mu)$, and it is decreasing.
    \label{theo:bounded}
\end{theorem}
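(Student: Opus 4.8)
The plan is to recognize that this theorem is a direct specialization of \cite[Theorem 1]{alessandri2008} to the discretized \emph{M-P} system in \eqref{eq:sys_line_discrete}, so the proof reduces to checking that every hypothesis of that theorem holds and then translating its contraction condition into the stated bound on $\mu$. First I would invoke Sec.~\ref{sec:assump_ver}, where the four standing assumptions of \cite{alessandri2008} --- boundedness of the noise and inputs, boundedness of the state, $\mathcal{C}^2$ regularity of $\mathbf{f}$ and $\mathbf{h}$, and $S$-observability in $N+1$ steps --- are each verified; in particular the observability assumption is supplied by the injectivity of $\mathbf{F}_N$ established in Proposition~\ref{prop:injective}. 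With these in place, \cite[Theorem 1]{alessandri2008} applies verbatim and delivers a sequence $\zeta_k$ dominating the squared estimation error, together with the recursion that governs $\zeta_k$.

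The second step is to make the contraction condition explicit. The convergence of $\zeta_k$ in \cite{alessandri2008} is controlled by the factor appearing in \eqref{eq:u_cond}, namely $8 c_f^2 \mu / (\mu + \delta) < 1$. I would substitute the value $\delta = 1/c_F$ obtained in Proposition~\ref{prop:bilipDelta} and rearrange: the inequality is equivalent to $\mu(8c_f^2 - 1) < \delta$, and since $c_f = 1 + c_g \Delta t > 1$ by Lemma~\ref{lemma:euler_lip} the coefficient $8c_f^2 - 1$ is strictly positive, so solving for $\mu$ yields exactly $\mu < 1/(c_F(8c_f^2-1))$. The Lipschitz constant $c_F$ itself is the finite sum \eqref{eq:kF}, so the admissible range for $\mu$ is nonempty and computable.

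Finally I would read off the qualitative behaviour of $\zeta_k$ from its recursion. Under the condition just derived the contraction factor is strictly below one, so the linear difference inequality for $\zeta_k$ has a unique fixed point $e_\infty(\mu)$ to which the sequence converges geometrically; the same sign structure of the recursion forces $\zeta_k$ to be monotonically decreasing toward $e_\infty(\mu)$. The main obstacle here is not any new estimate --- all the heavy lifting (injectivity, the bi-Lipschitz bound, and the explicit constants $c_f$, $c_F$, and $\delta$) has already been carried out in the preceding lemmas and propositions --- but rather the careful matching of our notation and quantities to the hypotheses of \cite{alessandri2008}, and verifying that its contraction argument indeed yields both exponential convergence and monotonic decrease once $\delta = 1/c_F$ is substituted.
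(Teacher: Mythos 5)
Your proposal is correct and follows essentially the same route as the paper's proof: both reduce the statement to \cite[Theorem 1]{alessandri2008}, using Proposition~\ref{prop:injective} for the $(N+1)$-step observability, Proposition~\ref{prop:bilipDelta} to substitute $\delta = 1/c_F$, and the rearrangement of \eqref{eq:u_cond} (with $c_f>1$ from Lemma~\ref{lemma:euler_lip}) to obtain $\mu < 1/(c_F(8c_f^2-1))$, after which exponential convergence and monotonic decrease of $\zeta_k$ are read off from \cite[Theorem 1]{alessandri2008}. The only minor difference is that the paper cites \cite{hanba2010} to argue that injectivity of $\mathbf{F}_N$ plus a full-rank Jacobian imply \eqref{eq:t1}, whereas you obtain \eqref{eq:t1} directly from the bi-Lipschitz bound of Proposition~\ref{prop:bilipDelta}; both are valid.
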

\begin{proof}
    From \cite[Theorem 1]{alessandri2008} if the system is observable in $N+1$ steps, and \eqref{eq:t1} holds, then the the estimation error is bounded by a sequence {$\zeta_k$}.
    Since Propositions~\ref{prop:injective} is verified, the system is observable.
    Besides, from \cite{hanba2010} it can be concluded that, if $\mathbf{F}_N$ is injective and its Jacobian full column rank, then \eqref{eq:t1} is verified.
    Thus, the estimation error is bounded.
    Furthermore, from Proposition~\ref{prop:bilipDelta}, $\delta = 1/c_F$, then from \eqref{eq:u_cond}, $\mu < 1/(c_F(8c_f^2-1))$, and by \cite[Theorem 1]{alessandri2008} the sequence $\zeta_k$ converges exponentially to an asymptotic value $e_{\infty}(\mu)$, and it is decreasing.
\end{proof}

The previous theorem shows that the system estimation error is bounded if it is observable for all $k$.
To show that this still holds in cases that the system is not observable for some periods of time, we present the following result:
\begin{theorem}
    Under Assumptions \ref{assump:depth}, \ref{assump:linearvel}, and if the noise and inputs of the system are bounded, there is a finite positive time $k_0$ such that for $k = 0,1, \ldots k_0$, the estimation error of the \emph{MHO} is bounded. Moreover, for $k> k_0$, the system is observable and Theorem~\ref{theo:bounded} applies.
\end{theorem}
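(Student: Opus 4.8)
The plan is to split the time axis into a finite initial segment, on which the system may fail to be observable, and a tail on which observability is permanent, to treat the two regimes separately, and then to stitch them together. The whole argument rests on reading Assumption~\ref{assump:linearvel} as a statement about the \emph{location} of the bad intervals.

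First I would extract the index $k_0$ directly from Assumption~\ref{assump:linearvel}. That assumption guarantees that the intervals on which $\bm{\nu}_c$ vanishes or lies in the line interpretation plane have lengths bounded above by a positive constant and do not recur infinitely often; hence there are only finitely many such intervals and the last of them terminates at some finite index, which I take to be $k_0$. For every $k>k_0$ the linear velocity is therefore nonzero and not contained in the interpretation plane, so the hypotheses of Propositions~\ref{prop:injective} and~\ref{prop:bilipDelta} hold on every window $\mathbfcal{I}_{k}$ with $k>k_0$. Consequently $\mathbf{F}_N$ is injective and bi-Lipschitz, $\delta=1/c_F$, and choosing $\mu<1/(c_F(8c_f^2-1))$ places us exactly in the situation of Theorem~\ref{theo:bounded}, whose error bound $\zeta_k$ and exponential decay then apply verbatim for $k>k_0$.

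Second I would show there is no finite escape on the initial segment $k=0,1,\ldots,k_0$. The tools here are Assumption~\ref{assump:depth} together with the boundedness hypotheses. Since $l>0$ throughout, $\mathbf{f}$ in \eqref{eq:fmo_sys} is $\mathcal{C}^2$, and with bounded inputs and bounded process noise the true trajectory, produced by finitely many applications of \eqref{eq:sys_line_discrete}, remains inside the compact admissible set $X$. The MHO estimate at each index is a minimizer of the continuous cost \eqref{eq:mhoCost} over that same compact set, so a minimizer exists and lies in $X$ even on a non-observable window (uniqueness is not needed). The estimation error at each of these finitely many indices is thus the difference of two elements of a compact set, and taking the maximum over $k=0,\ldots,k_0$ yields a finite uniform bound, i.e. no finite escape; the bound obtained at $k_0$ then serves as an admissible initial condition for the observable tail, where $\zeta_k$ from Theorem~\ref{theo:bounded} holds for arbitrary compact-set initial errors and the conclusion propagates for all $k>k_0$.

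The main obstacle I anticipate is not the finite-escape estimate, which is essentially a compactness argument, but rather pinning down $k_0$ rigorously from Assumption~\ref{assump:linearvel} and confirming that the MHO optimization stays well posed during the non-observable intervals, where $\mathbf{F}_N$ loses injectivity and \eqref{eq:mhoCost} may admit a non-unique argmin. Establishing that this non-uniqueness does not break boundedness, and that the handover at $k_0$ is compatible with the hypotheses of Theorem~\ref{theo:bounded}, is the part that will require the most care.
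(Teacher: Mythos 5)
Your skeleton --- extract a finite $k_0$ from Assumption~\ref{assump:linearvel}, invoke Propositions~\ref{prop:injective} and~\ref{prop:bilipDelta} and Theorem~\ref{theo:bounded} on the tail $k>k_0$, and prove no finite escape on the initial segment --- is exactly the paper's, and your handling of the tail and of the true trajectory (finitely many bounded updates of a continuous map under bounded inputs and noise) matches the paper in substance. Where you genuinely diverge is in bounding the \emph{estimate} over the non-observable windows: you get it for free from compactness, arguing that the minimizer of \eqref{eq:mho_op} is constrained to the compact admissible set $X$ and hence the error is at most the diameter of $X$. The paper instead argues through the structure of the cost: in both non-observable regimes ($\bm{\nu}_c=0$ and $\bm{\nu}_c^T\mathbf{m}=0$, Eqs.~\eqref{eq:dynv0} and \eqref{eq:dynvmnull}) the dynamics of the measured vector $\mathbf{m}$ decouple from $\bm{\chi}$, so the attainable value of \eqref{eq:mhoCost} is bounded by the norms of the angular velocity and of the noises $\bm{\lambda}_k,\bm{\xi}_k$; since the prediction term with $\mu>0$ penalizes deviation from the (bounded) propagated previous estimate, the minimizer cannot drift arbitrarily far, and with Assumption~\ref{assump:depth} the estimate stays finite. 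The trade-off is worth noting. Your route is shorter and needs no inspection of the dynamics, but it hinges on adopting the constrained formulation of \cite{alessandri2008}; the problem \eqref{eq:mho_op} as written (and as implemented with the unconstrained simplex method of \cite{lagarias1998}) is an unconstrained argmin, and under your reading the first half of the theorem becomes vacuous, since the error would be bounded by the diameter of $X$ at every $k$, observable or not. The paper's cost-boundedness argument survives in the unconstrained setting, which is precisely what makes the statement non-trivial; it also dissolves the obstacle you flag at the end, because the non-uniqueness of the argmin along the unobservable $\bm{\chi}$-directions is removed by the prediction term, which pins those components to the prediction. If you keep the compactness route, you should say explicitly that you are working with the constrained estimator of \cite{alessandri2008}, which the paper's assumption-verification section does license.
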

\begin{proof}
    Two conditions lead the \emph{M-P} system not to be observable.
    These correspond to the linear velocity being either null or parallel to the line's interpretation plane.
    Let us consider, the case $\bm{\nu} = 0$.
    The system dynamics become
    \begin{align}
        \begin{split}
            \mathbf{m}_{k+1} & = \mathbf{m}_{k} + (\bm{\omega}_{c_k} \times \mathbf{m}_k) \Delta t\\
            \bm{\chi}_{k+1} & = \bm{\chi}_{k} + (\bm{\omega}_{c_k} \times \bm{\chi}_k) \Delta t.
        \end{split}
        \label{eq:dynv0}
    \end{align}
    Taking the difference between consecutive states, and applying the Euclidean norm, we obtain
    \begin{align}
        \left\| \begin{bmatrix} \mathbf{m}_{k+1} - \mathbf{m}_{k} \\ \bm{\chi}_{k+1} - \bm{\chi}_{k} \end{bmatrix} \right\| \le &~ \|\bm{\omega}_{c_k}\|(1 + \|\bm{\chi}_k\|)\Delta t,
        \label{eq:stateupdatev0}
    \end{align}
    recall that $\| \mathbf{m}_k \| = 1, \forall k$.
    The right-hand side of \eqref{eq:stateupdatev0} is finite, and taking the summation over the time period $k = 0,1, \ldots k_0$, will yield a finite value.
    Let us now consider the case $\bm{\nu}_c^T\mathbf{m} = 0$.
    The system dynamics become
    \begin{align}
        \begin{split}
            \mathbf{m}_{k+1} & = \mathbf{m}_{k} + \bm{\omega}_{c_k} \times \mathbf{m}_k \Delta t\\
            \bm{\chi}_{k+1} & = \bm{\chi}_{k} + \left(\bm{\omega}_{c_k} \times \bm{\chi}_k + \bm{\nu}_{c_k}^T\bm{\chi}_k \bm{\chi}_k \right) \Delta t.
        \end{split}
        \label{eq:dynvmnull}
    \end{align}
    The state update is bounded by
    \begin{align}
        \left\| \begin{bmatrix} \mathbf{m}_{k+1} - \mathbf{m}_{k} \\ \bm{\chi}_{k+1} - \bm{\chi}_{k} \end{bmatrix} \right\| \le &~ \large(\|\bm{\omega}_{c_k}\|(1 + \|\bm{\chi}_k\|) + \|\bm{\nu}_{c_k}\| \|\bm{\chi}_k\|^2 \large)\Delta t.
        \label{eq:stateupdatevmnull}
    \end{align}
    Once again the summation over the time interval will also be finite.
    
    From \eqref{eq:fmo_out}, the measurements correspond to the moment vector. 
    Besides, from \eqref{eq:dynv0}, and \eqref{eq:dynvmnull}, the dynamics of that vector depend only on the angular velocity and itself.
    Thus, the cost function in \eqref{eq:mhoCost} is upper bounded by a function of the norms of the angular velocity, the measurement error $\bm{\lambda}_k$, and model error $\bm{\xi}_k$, which are all bounded.
    Given that the cost function is bounded, for all initial states, and from Assumption~\ref{assump:depth}, $\|\bm{\chi}\|$ cannot be infinite, and then the state estimate will be bounded.
    Since both the state and the estimate are bounded, so is the state estimation error.
    
    Finally, for $k > k_0$ the system is observable, i.e., the linear velocity is neither null nor orthogonal to $\mathbf{m}$, then Proposition~\ref{prop:injective}, and \ref{prop:bilipDelta} are verified, and Theorem~\ref{theo:bounded} can be applied.
\end{proof}
\begin{table}[t]
    \vspace{.1cm} 
    \centering
    \resizebox{.48\textwidth}{!}{
    \begin{tabular}{|c|c|c|c|c|c|c|}
        \hline
        $N$ & 2 & 3 & 4 & 5 & 6 & 7 \\ \hline 
        $\delta$ & 0.484 & 0.312 & 0.226 & 0.175 & 0.141 & 0.116 \\ \hline
        $\mu$ & 0.060 & 0.038 & 0.028 & 0.022 & 0.017 & 0.014 \\
        \hline
        \end{tabular}
    }
    \caption{\it Values of $\delta$ and $\mu$ for different estimation window sizes, and considering that $\max \|\bm{\nu}_c\| = \max \| \bm{\omega}_c\| = 0.5$, and $\max \|\bm{\chi}\| = 0.2$.}
    \label{tab:delta_mu}
\end{table}

\section{Experimental Results}
\label{sec:expResults}

This section presents the evaluation of the \emph{MHO} in Sec.~\ref{sec:mho_lines}, and \emph{MLO} in Sec.~\ref{sec:obs_nonlinear}, for state estimation of the line representations \emph{Sphere} (see Sec.~\ref{sec:spherical}), and \emph{M-P} (see Sec.~\ref{sec:newlinerep}). 
The evaluation consists of simulated (see Sec.~\ref{sec:sim}) and real data (see Sec.~\ref{sec:realResults}).
In simulation four representation/observer combinations are considered: 1) \emph{MLO+Sphere}; 2) \emph{MLO+M-P}, 3) \emph{MHO+M-P}, and 4) \emph{MHO+Sphere}. 
It is important to stress that there are no guarantees of convergence when the state approaches the singularities of the \emph{Sphere} representation. Nevertheless, the method can converge if the camera motion keeps the state far enough from the singularities.
The focus of the evaluation is on the converge time and robustness to measurement errors.
The real data was acquired by a mobile robot MBOT (see Fig.~\ref{fig:mbotSetup}\subref{fig:mbot}) and a manipulator of the Baxter robot (see Fig.~\ref{fig:baxterSetup}).
Since the Sphere representation has singularities, and thus no convergence guarantees can be provided, only the methods \emph{MLO+M-P}, and \emph{MHO+M-P} were evaluated with real data.  

\subsection{Simulation Results}
\label{sec:sim}

We consider four different tests.
In the first, we evaluate the runtime of each iteration of the \emph{MHO+M-P}, to assess if the observer can be applied to real-time applications.
The second test aims at evaluating the \emph{MHO} against the \emph{MLO} in noise-free scenarios.
The goal is to assess the convergence time of both observers with different parameters.
For the \emph{MLO}, the gain $\alpha$ (in \eqref{eq:newobserver}) was varied.
For the \emph{MHO}, the size $N$ (observation window), and the weight $\mu$ (in \eqref{eq:mhoCost}) were varied.
One parameter was fixed, and the other is varied and vice-versa.
The third test evaluates the robustness of the methods to measurement noise. 
The observers' window size and gains were chosen such that the convergence time is as similar as possible.
The final test evaluates the behavior of both the \emph{MLO} and \emph{MHO} when there are times intervals where the system is not observable.
The goal is to assess the performance of the observers and if they are still able to converge.

The simulations were conducted in MATLAB to generate the lines and run the observers.
The \emph{MHO} was implemented using the \emph{Optimization Toolbox}, in particular the \emph{fminsearch} function, which uses the simplex method of \cite{lagarias1998}. 
This method is not guaranteed to converge.
Thus, to assess if the method converged given our cost function, we run 200 simulations of 3D line estimation (10 seconds each), amounting to approximately 58000 calls to fminsearch, and evaluated the exit flag, which returned a true statement for all of the function calls.  
The imaging device consists of a perspective camera, with intrinsic parameters (for more details see \cite{hartley2003}) defined as $\mathbf{I}_3 \in \mathbb{R}^{3\times3}$. 
The camera is assumed to be free-flying; i.e., it can move in any of its six degrees-of-freedom.
To generate a 3D line, we start by sampling a point in a $5$ meter side cube in front of the camera.
Then, a unit direction $\mathbf{d}$ is randomly selected, and the moment vector $\mathbf{m}$ is computed using \eqref{eq:h_def}.
The initial state is initialized with the real values for $\mathbf{m}$, $\theta$, and $\phi$, and the unknown quantities $\bm{\chi}$, $\eta_1$, and $\eta_2$ are set randomly.
The camera linear velocities are computed using the Active Vision control law proposed in \cite{spica2013}, which has been tuned for lines in \cite{mateus2018,mateus2019}.

To assess the range of values the gain $\mu$ in \eqref{eq:mhoCost} can take for different observation window sizes, we start by selecting the camera's operation range velocities and the minimum distance a line can be to the camera.
These ranges are $\max \|\bm{\nu}_c\| = \max \| \bm{\omega}_c\| = 0.5$, and $\max \|\bm{\chi}\| = 0.2$.
To compute an upper bound to $\mu$, we start by computing $c_F$ in \eqref{eq:kF}, then the parameter $\delta$ defined in Proposition~\ref{prop:bilipDelta}, and finally the upper bound using \eqref{eq:u_cond}.
The values of $\delta$ and $\mu$ for different estimation window sizes are presented in Tab.~\ref{tab:delta_mu}.

\vspace{.15cm}
\noindent
{\bf Iteration time:} 
This test presents the evaluation of the computation time of the iterations of the four methods.
To achieve this goal, the runtime of each iteration of $100$ runs were stored for different window sizes of the \emph{MHO} methods.
For the methods using the \emph{MLO}, the iteration time of $100$ runs was also considered.
Notice that the gain $\alpha$ does not influence the runtime.
Tab.~\ref{tab:iterRunTIme} presents the mean and median iteration runtime.
The most computational effort required in the \emph{MLO} is an SVD decomposition, and thus its iteration time is considerably shorter than the \emph{MHO}.
From inspection, we see that even for greater window sizes, the \emph{MHO} can be applied in real-time applications; since it can still run at $27$ frames-per-second (fps).
Notice that typical cameras operate at $24$ or $30$ fps.
With respect to the \emph{Sphere} and \emph{M-P} representations, one iteration of the \emph{MHO} is faster for the former, which is due to the smaller state space.
The higher iteration time for the \emph{MLO+Sphere} with respect to the \emph{MLO+M-P} can be explained by the multiple calls to the \emph{sin} and \emph{cos} functions.

Iteration time was also evaluated for a different number of lines for the \emph{MLO+M-P} and \emph{MHO+M-P} methods.
For this test, the state space was increased by stacking the \emph{M-P} coordinates of each line.
Thus, the state's size is $6M$, with $M$ being the number of lines.
The results are presented in Fig.~\ref{fig:iterTimeMulti}\subref{fig:mloMulti} for the \emph{MLO+M-P} method, and in Fig.~\ref{fig:iterTimeMulti}\subref{fig:mhoMulti} for \emph{MHO+M-P}.
From one to two lines, the iteration times raises drastically, about one order of magnitude.
When considering three lines, the iteration time tends to double, but the growth rate slows down when adding more lines.
The ability to estimate multiple lines at a typical camera framerate is an issue, which must be addressed.
Solutions to the increase of the iteration time are: 
1) optimizing the solver by either using implementations in C++ and/or designing a specific solver to the optimization problem;
2) to have several estimators in parallel, but the available CPU threads would limit the number of lines; and
3) to perform the estimation in multi-robot systems, thus distributing the computational burden.

\begin{table}[t]
    \vspace{.1cm} 
    \centering
    \subfloat[Iteration time for both observers using the Sphere representation]{
        \resizebox{.5\textwidth}{!}{
        \begin{tabular}{|c|c|c|c|c|c|c|}
            \cline{2-7}
            \multicolumn{1}{c|}{} & \multicolumn{5}{c|}{MHO, N} & {MLO} \\
            \hline
            {\bf Time/Iter} (sec) & 3 & 4 & 5 & 6 & 7 &  \\
            \hline
            Mean & 0.020 & 0.024 & 0.027 & 0.032 & 0.036 & 1.761$\times10^{-4}$ \\
            \hline
            Median & 0.019 & 0.023 & 0.026 & 0.029 & 0.033 & 1.800$\times10^{-4}$ \\
            \hline
        \end{tabular}
        }
    }\\
    \subfloat[Iteration time for the observers using the M-P representation]{
        \resizebox{.5\textwidth}{!}{
        \begin{tabular}{|c|c|c|c|c|c|c|}
            \cline{2-7}
            \multicolumn{1}{c|}{} & \multicolumn{5}{c|}{MHO, N} & {MLO} \\
            \hline
            {\bf Time/Iter} (sec) & 3 & 4 & 5 & 6 & 7 &  \\
            \hline
            Mean & 0.023 & 0.027 & 0.031 & 0.034 & 0.037 & 7.375$\times10^{-5}$  \\
            \hline
            Median & 0.022 & 0.026 & 0.029 & 0.033 & 0.036 & 6.800$\times10^{-5}$ \\
            \hline
        \end{tabular}
        }
    }
    
    \caption{\it Mean and median runtimes of MHO and MLO iterations for both line representations. An iteration is defined as solving one instance of the optimization problem in \eqref{eq:mho_op} for the MHO. For the MLO it is defined as time to compute the gain matrix $H$ in Sec.~\ref{sec:designH} and the integration of the MLO dynamics (see Sec.~\ref{sec:spicaMP} and Sec.~\ref{sec:spicaSphere}).
    }
    \label{tab:iterRunTIme}
\end{table}

\begin{figure}
    \centering
    \subfloat[Iteration time for the \emph{MLO+M-P} method with multiple lines.]{
        \includegraphics[width=0.23\textwidth]{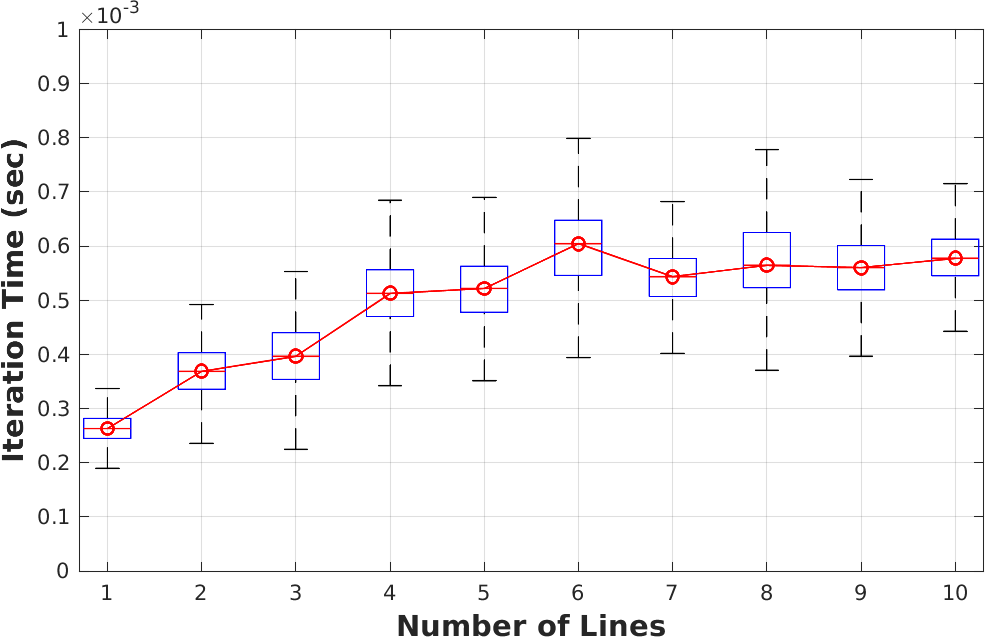}
        \label{fig:mloMulti}
    }\hfill
    \subfloat[Iteration time for the \emph{MHO+M-P} method with multiple lines.]{
        \includegraphics[width=0.23\textwidth]{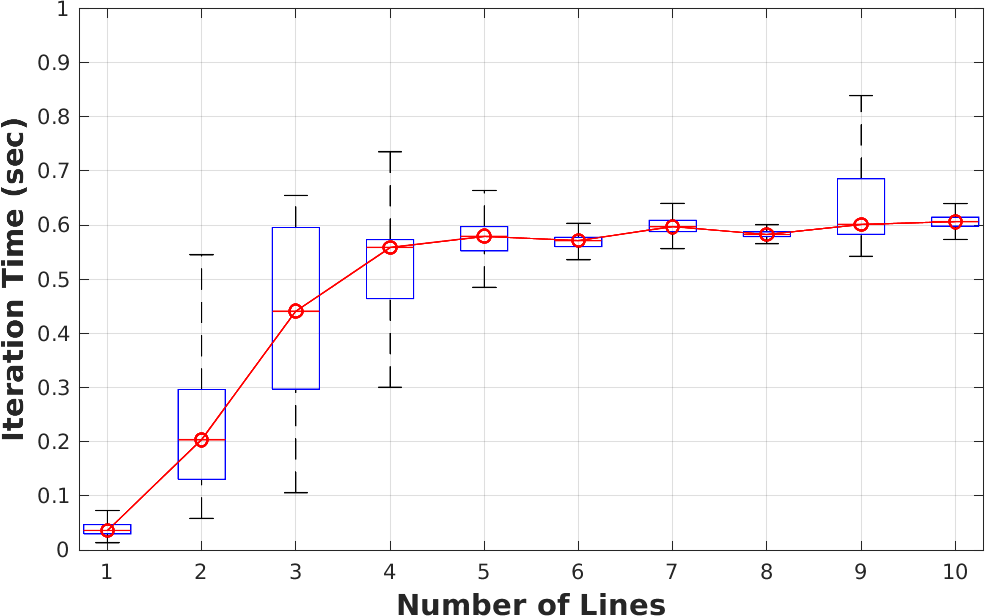}
        \label{fig:mhoMulti}
    }
    \caption{\it Iteration runtimes of 100 runs of the \emph{MLO+M-P} (left) and \emph{MHO+M-P} (right) with an increasing number of multiple lines. Notice that the vertical scales of the plots is different.}
    \label{fig:iterTimeMulti}
\end{figure}

\vspace{.15cm}
\noindent
{\bf Noise free data:}
This simulation test consists of running the 1) \emph{MLO+Sphere}, 2) \emph{MLO+M-P}, 3) \emph{MLO+Sphere}, and 4) \emph{MHO+M-P} for $100$ runs in a noiseless scenario. 
The goal is to evaluate the mean and median convergence time\footnote{We consider that the method converged when the norm of the estimation error is less than $0.01$.} of both observers for different values of their parameters. 
Tab.~\ref{tab:noiseFree} present the results of this test. In the table, we can see the convergence times for different parameter combinations. 
The fastest convergence is achieved for $\alpha = 1000$ for the \emph{MLO+Sphere}, closely followed by the \emph{MLO+M-P} and \emph{MHO+Sphere} for $\alpha = 1000$, and $N = 7$, $\mu = 0.014$ respectively. 
Since the convergence is assessed with respect to the norm of the estimation error, it is expected that the \emph{Sphere} representation presents a faster convergence.
Recall that the state space is smaller than the one of the \emph{M-P} representation.
Nevertheless, the price to pay for the faster convergence of the \emph{Sphere} is the observers' stability due to this representation's singularities.
An illustration of each parameter's effect in the convergence time is presented in Fig.~\ref{fig:simNoiselessResults}. 
We can see that the higher the gain of the \emph{MLO}, the faster it converges.
As for the \emph{MHO}, the wider the observation window, the faster it converges, and the higher the gain, the slower the observer converges.
The effect of the gain in the \emph{MHO} convergence is expected from inspection of the cost function in \eqref{eq:mhoCost}.
The higher the gain, the more weight the prediction term has, and thus the next estimate will be closer to the prediction.
Meaning that if the estimate is still far from the real value, it will tend to stay closer to the prediction than to move towards the state of minimum measurement error.

\begin{figure*}
    \centering
    \subfloat[Convergence of the \emph{MLO+M-P} method for increasing gains.]{
        \includegraphics[width=0.3\textwidth]{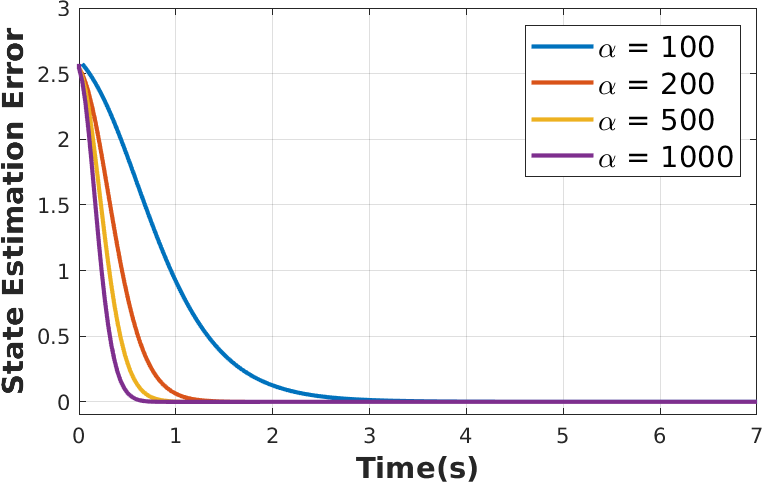}
        \label{fig:spicadiffgain}
    }
    \hfill
    \subfloat[Convergence of the \emph{MHO+M-P} method for increasing window sizes.]{
        \includegraphics[width=0.3\textwidth]{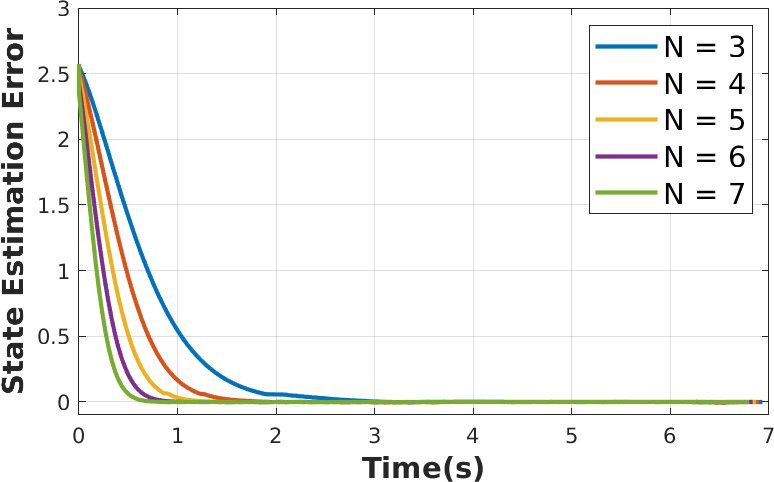}
        \label{fig:mhodiffN}
    }
    \hfill
    \subfloat[Convergence of the \emph{MHO+M-P} method for increasing gains.]{
        \includegraphics[width=0.3\textwidth]{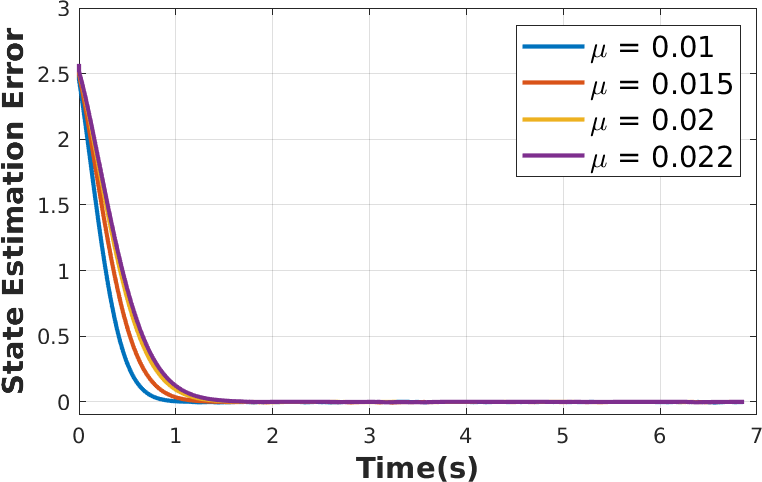}
        \label{fig:mhodiffMu}
    }
    \caption{\it Evaluation of the convergence time of the MLO and the MHO for different parameter values. For the former different gains were considered, while for the latter, both the gain and window size were varied.
    The variable being presented is one of the entries of vector $\bm{\chi}$ of the \emph{M-P} representation.}
    \label{fig:simNoiselessResults}
\end{figure*}

{\setlength{\tabcolsep}{2pt}
\begin{table}[t]
    \vspace{.1cm}
    \centering
    \subfloat[Convergence Time for the \emph{Sphere} representation with both \emph{MLO} and \emph{MHO} observer.]{
        \resizebox{.5\textwidth}{!}{
        \begin{tabular}{|c|c|c|c|c|c|c|c|c|c|c|c|c|c|}
            \cline{2-14}
            \multicolumn{1}{c|}{} &
            \multicolumn{4}{c|}{\bf \emph{MLO+Sphere} \cite{mateus2019}} &
            \multicolumn{9}{c|}{\bf \emph{MHO+Sphere}} \\
            \cline{2-14}
            \multicolumn{1}{c|}{} & 
            \multicolumn{4}{c|}{$\alpha$} &
            \multicolumn{5}{c|}{$N$, $\mu = 0.014$} &
            \multicolumn{4}{c|}{$\mu$, $N = 5$} \\
            \cline{2-14}
            \multicolumn{1}{c|}{} & 100 & 200 & 500 & 1000 & 3 & 4 & 5 & 6 & 7 & 0.01 & 0.015 & 0.02 & 0.022 \\
            \hline
            \makecell{Mean Conv. \\ Time (sec)} & 2.088 & 1.606 & 1.137 & 0.880 & 3.511 & 2.127 & 1.542 & 1.220 & 0.880 & 1.242 & 1.559 & 1.892 & 2.212 \\
            \hline
            \makecell{Median Conv. \\ Time (sec)} & 1.967 & 1.500 & 1.000 & 0.767 & 3.300 & 1.933 & 1.300 & 0.967 & 0.733 & 1.033 & 1.417 & 1.617 & 1.767 \\
            \hline
            \end{tabular}
        }
    }\\
    \subfloat[Convergence Time for the \emph{M-P} representation with both \emph{MLO} and \emph{MHO} observer.]{
        \resizebox{.5\textwidth}{!}{
        \begin{tabular}{|c|c|c|c|c|c|c|c|c|c|c|c|c|c|}
            \cline{2-14}
            \multicolumn{1}{c|}{} &
            \multicolumn{4}{c|}{\bf \emph{MLO+M-P}} &
            \multicolumn{9}{c|}{\bf \emph{MHO+M-P}} \\
            \cline{2-14}
            \multicolumn{1}{c|}{} & 
            \multicolumn{4}{c|}{$\alpha$} &
            \multicolumn{5}{c|}{$N$, $\mu = 0.014$} &
            \multicolumn{4}{c|}{$\mu$, $N = 5$} \\
            \cline{2-14}
            \multicolumn{1}{c|}{} & 100 & 200 & 500 & 1000 & 3 & 4 & 5 & 6 & 7 & 0.01 & 0.015 & 0.02 & 0.022 \\
            \hline
            \makecell{Mean Conv. \\ Time (sec)} & 2.666 & 2.186 & 1.671 & 1.349 & 4.580 & 3.251 & 2.592 & 1.799 & 1.587 & 2.095 & 2.383 & 3.461 & 3.148 \\
            \hline
            \makecell{Median Conv. \\ Time (sec)} & 2.267 & 1.667 & 1.133 & 0.833 & 4.550 & 2.167 & 1.733 & 1.200 & 0.833 & 1.433 & 1.767 & 2.300 & 2.233 \\
            \hline
            \end{tabular}
        }
    }
    
    \caption{\it Mean and median convergence times of observers \emph{MLO} in \cite{mateus2019}, and \emph{MHO} in \cite{alessandri2008}, for the estimation of lines with the \emph{Sphere} and \emph{M-P} representations (see Sec.~\ref{sec:spherical} and Sec.~\ref{sec:newlinerep} respectively). 
    }
    \label{tab:noiseFree}
\end{table}
}

\vspace{.15cm}
\noindent
{\bf Noisy data:} 
This simulation test consists of evaluating the observers with increasing levels of noise.
Noise is added to the measurements, in this case, the vector $\mathbf{m}$, which can be retrieved directly from the image.
Since it is a unit vector, perturbations are added by sampling a random rotation matrix and multiplying it to the moment vector.
The three Euler angles are sampled from a uniform distribution with increasing standard deviation to generate this matrix, and then the angles are converted to a rotation matrix.
Notice that $\theta$ and $\phi$ of the \emph{Sphere} representation are computed from $\mathbf{m}$. Thus noise is only added in the vector and not to the angles.
The direction and depth errors are defined as
\begin{equation}
    \epsilon_{\mathbf{d}} = \arccos(\hat{\mathbf{d}}_i^T\mathbf{d}_i), \ \text{and} \ \
    \epsilon_l =  \| \hat{l}_i - l_i \|,
    \label{eq:errors}
\end{equation}
respectively.
The gain $\alpha$ was set to $1000$, and the window size and gain of the \emph{MHO} were set as $N = 7$ and $\mu = 0.014$ -- for both representations --, respectively. 
The median direction and depth errors for $100$ runs of each noise level are presented in Figs.~\ref{fig:simNoisyResults}\subref{fig:derror}, and \ref{fig:simNoisyResults}\subref{fig:lerror}, respectively.
The \emph{MHO+M-P}, and \emph{MHO+Sphere} present a significant improvement in performance when compared to \emph{MLO+Sphere}, and \emph{MLO+M-P}, with the \emph{M-P} performing slightly better. 
This behavior is expected because it considers measurement and model noise, while the continuous observers do not.
Between the observers in \eqref{eq:spherical_observer} and \eqref{eq:newobserver}, the one using the \emph{M-P} line formulation in Sec.~\ref{sec:newlinerep} -- which is the new model presented in this paper -- performs better.
This is due to the fewer number of steps to recover the \emph{Pl\"ucker coordinates} than the \emph{Sphere} representation.
Recall, that \emph{M-P} only requires a cross product (see \eqref{eq:chi}), while the parameterization \emph{Sphere} requires to retrieve $\mathbf{m}$ from the spherical angles, and then $\mathbf{d}$ by inverting the orthonormal basis in Sec.~\ref{sec:newlinerep}.

\begin{figure}
    \centering
    \subfloat[Median line direction error.]{
        \includegraphics[width=0.23\textwidth]{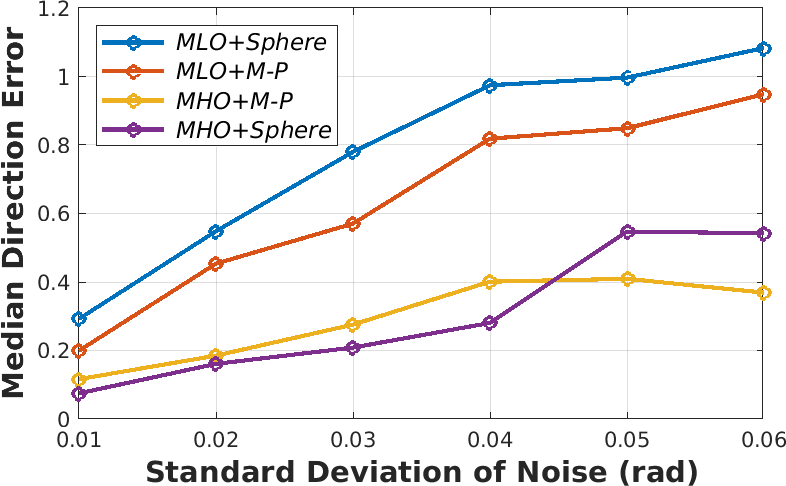}
        \label{fig:derror}
    }\hfill
    \subfloat[Median line depth error.]{
        \includegraphics[width=0.23\textwidth]{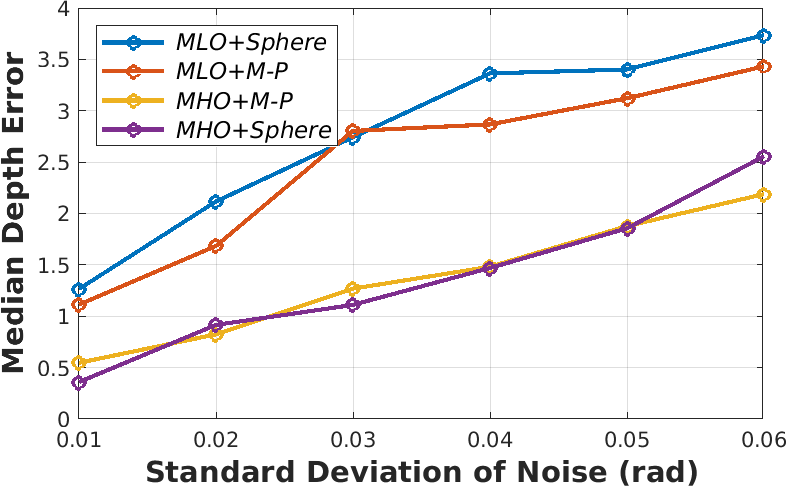}
        \label{fig:lerror}
    }
    \caption{\it Evaluation of the line direction and depth errors for increasing levels of measurement noise.}
    \label{fig:simNoisyResults}
\end{figure}

\vspace{.15cm}
\noindent
{\bf Non observable time intervals:}
In this simulation tests, the setup (of the camera and line) is similar to the noise-free tests. However, we consider time periods where the camera's linear velocity is such that the system is not observable.
This happens when the linear velocity is null or it is contained in the line interpretation plane, i.e., $\bm{\nu}_c^T\mathbf{m} = 0$.
The tests consist of estimating a line for $10$ seconds and alternating the camera velocity so that the system is observable for 1 second and non-observable for another.
Four combinations of possible linear and angular velocities were considered: 1) the angular velocity is null, and the linear velocity alternates between null and non-null values; 2) The linear velocity behaves as in 1), but the angular velocity is not null; 3) The angular velocity is null, and the linear velocity alternates between being or not in the line interpretation plane; 4) The linear velocity behaves as in 3), but the angular velocity is not null.
These scenarios are not arbitrary since they have practical significance.
Scenarios 1) and 2) are common in differential drive robots when they move forward and stop to rotate about the kinematic center.
Scenarios 3) and 4) can happen when a mobile robot moves in the floor plane and attempts to estimate horizontal lines.
The magnitude of the estimation error, i.e., norm of the estimation error of the \emph{MLO}, and \emph{MHO} is presented in Fig.~\ref{fig:nonObsTimeInt}.
We can see that when the camera comes to a complete stop (see Fig.~\ref{fig:nonObsTimeInt}~\subref{fig:magv0vw0}), the norm of the error does not increase.
The same behavior is verified for 2), Fig.~\ref{fig:nonObsTimeInt}~\subref{fig:magv0vw}, since the movement for null linear velocity consist of a rotation, which does not affect the depth of the line.
In Fig.~\ref{fig:nonObsTimeInt}~\subref{fig:magvtm0vw0} and Fig.~\ref{fig:nonObsTimeInt}~\subref{fig:magvtm0vw}, we can see that the error increases in the non observable regions, and it increases more the higher the current norm of the error.
Nonetheless, since the non-observable time intervals are not arbitrarily large, both observers can converge.
When the norm of the error is close to zero, the divergence is not significant.

\begin{figure*}
    \centering
    \subfloat[Norm of the error for $\bm{\omega}_c = 0$, and the linear velocity alternating between non-null and zero.]{
        \includegraphics[width=0.23\textwidth]{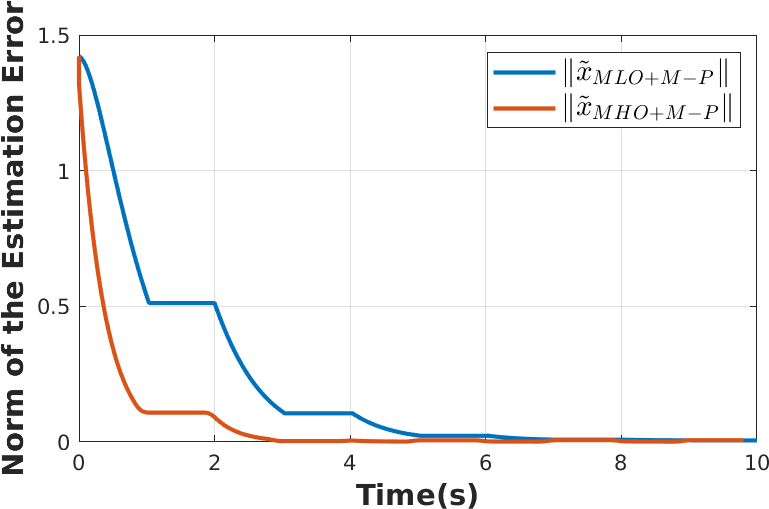}
        \label{fig:magv0vw0}
    }
    \hfill
    \subfloat[Norm of the error for $\bm{\omega}_c \neq 0$, and the linear velocity alternating between non-null and zero.]{
        \includegraphics[width=0.23\textwidth]{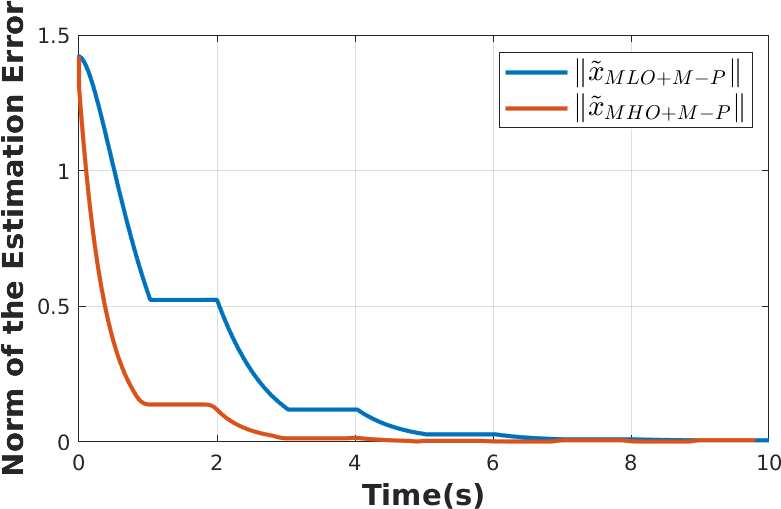}
        \label{fig:magv0vw}
    }
    \hfill
    \subfloat[Norm of the error for $\bm{\omega}_c = 0$, and the linear velocity alternating between $\bm{\nu}_c^T\mathbf{m} \neq 0$, and $\bm{\nu}_c^T\mathbf{m} = 0$.]{
        \includegraphics[width=0.23\textwidth]{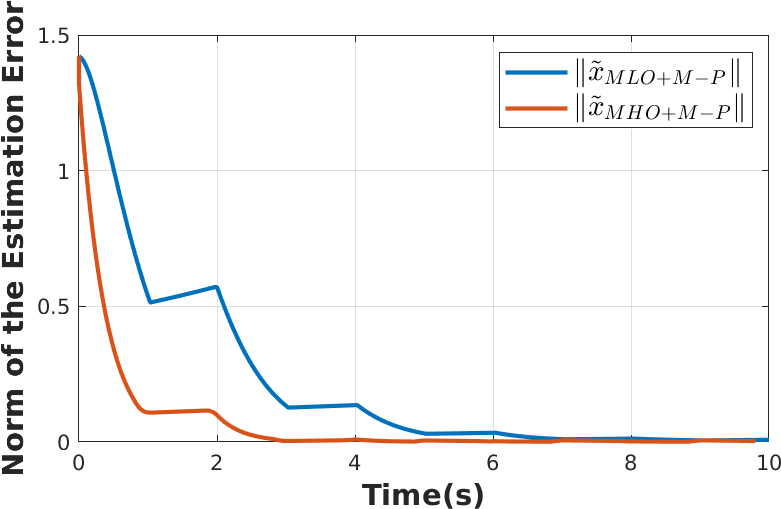}
        \label{fig:magvtm0vw0}
    }
    \hfill
    \subfloat[Norm of the error for $\bm{\omega}_c \neq 0$, and the linear velocity alternating between $\bm{\nu}_c^T\mathbf{m} \neq 0$, and $\bm{\nu}_c^T\mathbf{m} = 0$.]{
        \includegraphics[width=0.235\textwidth]{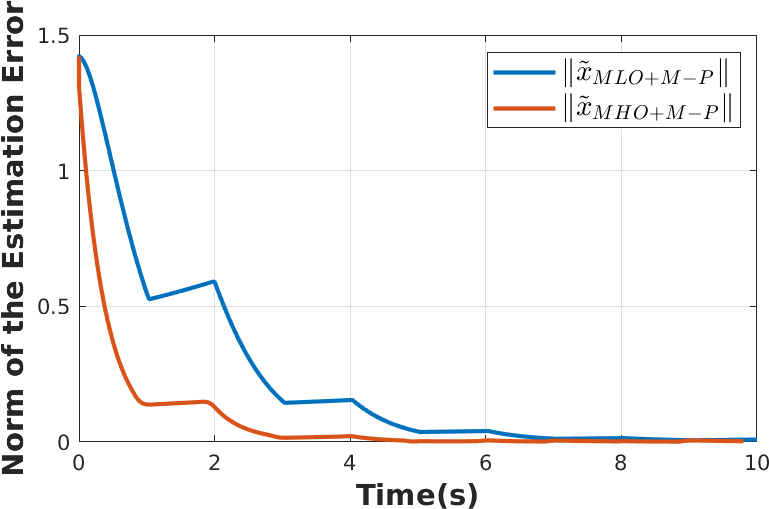}
        \label{fig:magvtm0vw}
    }
    \caption{\it Magnitude of the state estimation error of the \emph{MLO+M-P}, and \emph{MHO+M-P} for different combinations of the linear and angular velocity. On the left, the camera moves with null angular velocity, and the linear velocity alternates between null e non-null values. On the middle left, the camera moves with the same linear velocity as on the left, but the angular velocity is non-null. On the middle right, the camera moves with null angular velocity, and the linear velocity alternates between being or not contained in the line interpretation plane. Finally, on the right, the camera moves as in the previous experiment, but with non-null angular velocity.}
    \label{fig:nonObsTimeInt}
\end{figure*}

\subsection{Real Data Results}
\label{sec:realResults}

This section presents the results of the \emph{MHO+M-P} compared with the \emph{MLO+M-P}.
The experimental setup consists of a camera mounted on two robots, an omnidirectional mobile robot MBOT \cite{messias2014}, and a $7$ degrees-of-freedom manipulator Baxter Research Robot.
The robots' APIs are available in the Robot Operating System (ROS) \cite{quigley2009}.
For feature/line tracking, the Visual Servoing Platform (ViSP) \cite{marchand2005} was used, namely the moving-edge tracker \cite{marchand2005b}.
Ground-truth data was acquired by computing the pose of the camera concerning the target where the line lies.
The pose is computed by the POSIT algorithm in \cite{oberkampf1996}, from the detection of four known points in the target reference frame.
The code for data collection is implemented in \emph{C++}.
For the observers, the same implementation as Sec.~\ref{sec:sim} was used.

 \begin{figure}
     \vspace{-.15cm}
     \centering
     \subfloat[MBOT robot setup.]{
         \includegraphics[height=0.125\textheight]{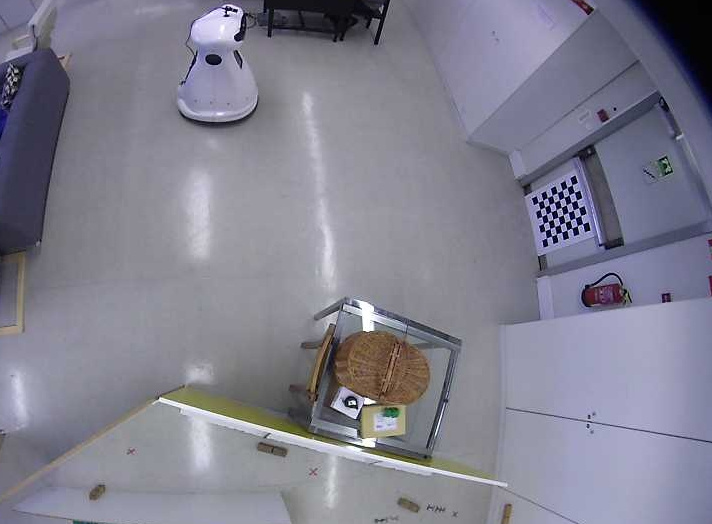}
         \label{fig:mbot}
     }\hfill
     \subfloat[Camera image with tracked line and points for pose estimation.]{
         \includegraphics[height=0.125\textheight]{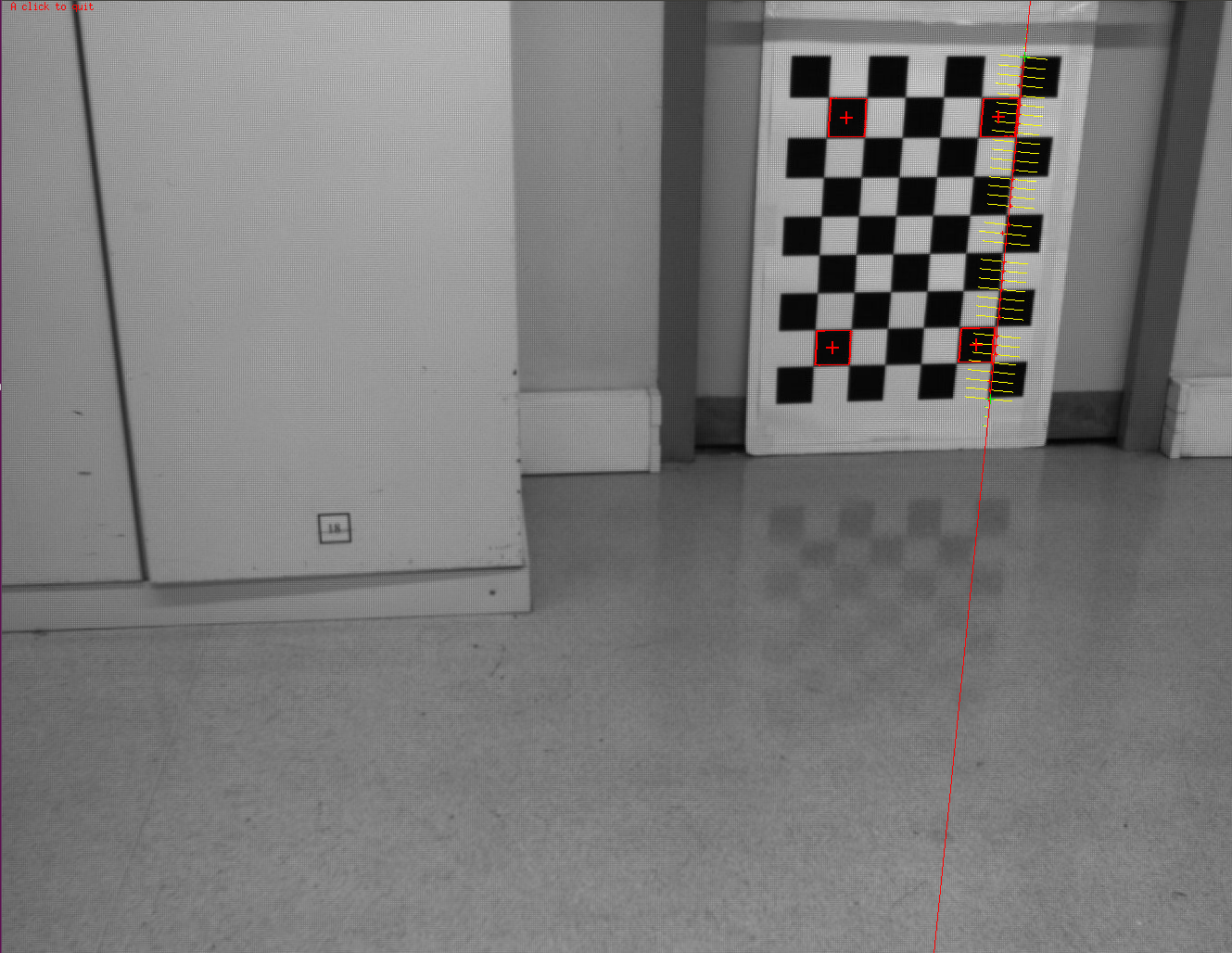}
         \label{fig:line_track}
     }
     \caption{\it On the left, the experimental setup, with the MBOT platform,  is shown. It is an omnidirectional platform, with 3 dof (2 linear and 1 angular). On top, there is a Pointgrey Flea3 USB3 camera. On the right, an image from the camera with the chessboard's four points, used for pose estimation (to obtain the true coordinates of the line) and the line tracked with the moving-edges tracker \cite{marchand2005b}.}
     \label{fig:mbotSetup}
 \end{figure}

\begin{figure}[!tbp]
  \centering
  \begin{minipage}[t]{0.235\textwidth}
    \includegraphics[height=.12\textheight]{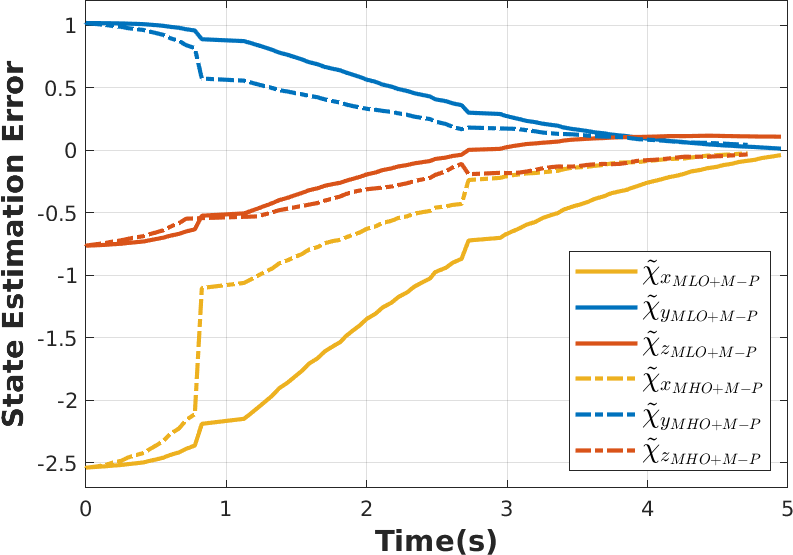}
    \caption{\it State estimation error of the \emph{MLO+M-P} and \emph{MHO+M-P} with real data acquired by the MBOT platform.}
    \label{fig:mhoRealMBOT}
  \end{minipage}
  \hfill
  \begin{minipage}[t]{0.235\textwidth}
    \includegraphics[height=.12\textheight]{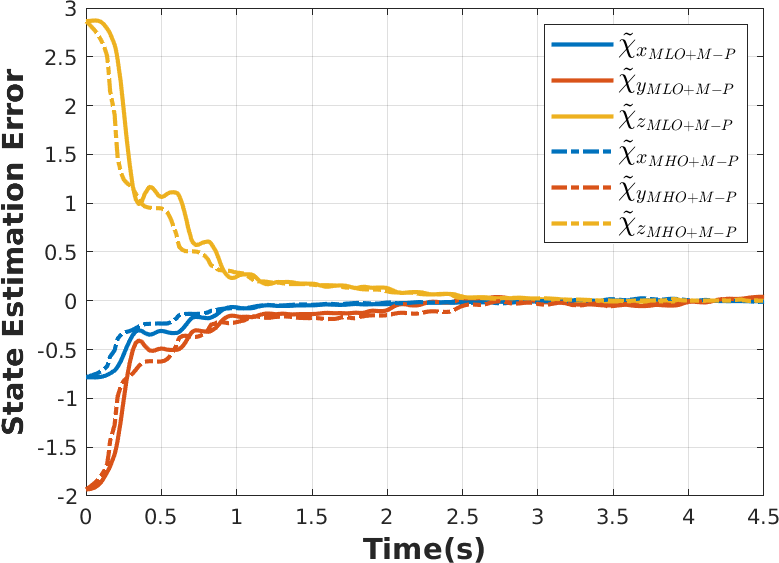}
    \caption{\it State estimation error with real data acquired by the Baxter robot.}
    \label{fig:mhoRealBaxter}
  \end{minipage}
\end{figure}

\vspace{.15cm}
\noindent
{\bf MBOT Results:}
In this set of experiments a Pointgrey Flea3 USB3 camera\footnote{https://www.ptgrey.com/flea3-usb3-vision-cameras}, which was mounted on the top of the robot, see Fig.~\ref{fig:mbotSetup}\subref{fig:mbot}, was used.
The target was a chessboard, and a line was defined on the leftmost edge of the board.
Four inner squares were chosen as the reference points for the pose estimation, see Fig.~\ref{fig:mbotSetup}\subref{fig:line_track}.
The gain $\alpha$ of the observer in \eqref{eq:newobserver} was set to $100$, and the window size and gain of the MHO were set as $N = 5$ and $\mu = 0.015$, respectively. 
For a fair comparison, these parameters were chosen such that the convergence time of both observers is similar and to have a small gain $\alpha$, so that the observer in \eqref{eq:newobserver} does not suffer much with noise amplification. 
A plot of the state estimation error for both observers is presented in Fig.~\ref{fig:mhoRealMBOT}.
For readability purposes, only the error of the unknown vector $\bm{\chi}$ is presented.
A total of six runs were performed with the MBOT platform. The results of the direction and depth errors as defined in \eqref{eq:errors} are presented in Tab.~\ref{tab:spicaRealMBOT}.
The line reconstruction with this platform results in poor results since the velocity readings are noisy. They are obtained from the odometer, i.e., by direct integration of the wheel motor encoders.
To reduce the effect of velocity readings noise, and since the robot velocity is near-constant through the runs, a low pass filter was applied to the signal.
We can see that the \emph{MHO+M-P} outperforms the \emph{MLO+M-P} in all runs except for the second.
These results are expected since the former takes into account noise in the modeling, while the latter takes the measurements and model at face value.

\begin{table}[t]
    \vspace{.1cm}
    \centering
    \subfloat[{\it Direction estimation error on the MBOT datasets in radians.}]{
    \resizebox{.48\textwidth}{!}{
    \begin{tabular}{|c|c|c|c|c|c|c|}
        \cline{2-7}
        \multicolumn{1}{c|}{} & Run 1 & Run 2 & Run 3 & Run 4 & Run 5 & Run 6 \\
        \hline
        \emph{MLO+M-P} & 0.264 & {\bf 0.447} & 0.330 & 0.161 & 0.239 & 0.391 \\
        \hline
        \emph{MHO+M-P} & {\bf 0.178} & 0.460 & {\bf 0.105} & {\bf 0.090} & {\bf 0.123} & {\bf 0.179} \\
        \hline
    \end{tabular}
    }
    }\\
    \subfloat[{\it Depth estimation error on the MBOT datasets in meters.}]{
    \resizebox{.48\textwidth}{!}{
    \begin{tabular}{|c|c|c|c|c|c|c|}
        \cline{2-7}
        \multicolumn{1}{c|}{} & Run 1 & Run 2 & Run 3 & Run 4 & Run 5 & Run 6 \\
        \hline
        \emph{MLO+M-P} & 0.210 & 0.359 & 0.378 & 0.327 & 0.014 & 0.177  \\
        \hline
        \emph{MHO+M-P} & {\bf 0.068} & {\bf 0.289} & {\bf 0.181} & {\bf 0.111} & {\bf 0.093} & {\bf 0.085}  \\
        \hline
    \end{tabular}
    }
    }
    \caption{\it Direction and depth errors as defined in \eqref{eq:errors} for six sequences retrieved by the MBOT robot.}
    \label{tab:spicaRealMBOT}
\end{table}

\vspace{.15cm}
\noindent
{\bf Baxter Results:}
In this set of experiments, the camera on the left end-effector of a Baxter robot was used as the imaging device. 
The manipulator has seven degrees-of-freedom, and the pose of the camera concerning the end-effector is available; i.e., the camera is calibrated extrinsically.
The target is composed of six lines in two wooden boards placed in front of the camera, see Fig.~\ref{fig:baxterSetup}.
A single line is reconstructed at a time.
However, the vertical lines were not estimated.
This was because the camera's linear motion (vertical lift) was nearly co-planar with the lines' interpretation plane.
This is a singularity of the line dynamics, see \eqref{eq:mdyn}, which leads to a loss of observability in the sense of \cite{spica2013}.
Furthermore, to prevent self-collisions, the velocity of the manipulator was reduced.
This lead to the need to increase the gain $\alpha$ and the observation window size $N$, with respect to the MBOT sequences.
Recall that the estimation will be faster, the higher the norm of the linear velocity and/or the higher the gains.
Thus, $\alpha$ was set to $1000$, and the window size and gain of the \emph{MHO+M-P} were set as $N = 7$, and $\mu = 0.013$, respectively.
A plot of the state estimation error for one of the lines is presented in Fig.~\ref{fig:mhoRealBaxter}.
Notice that, given the high value of $\alpha$, the estimation curves of the \emph{MLO+M-P} are more susceptible to noise, and the curves \emph{MHO+M-P} are smoother.
The direction and depth errors as defined in \eqref{eq:errors} are presented in Tab.~\ref{tab:spicaRealBaxter}.
Given the lower noise in the manipulator's velocity readings with respect to the MBOT, the errors are smaller than those obtained by the mobile robot.
Besides, the lower speed (norm of the linear velocity), and more stable camera position, results in lower measurement noise, leading to better results.
Even though this data presents less noise than the MBOT data, the \emph{MHO+M-P} continues to outperform the \emph{MLO+M-P} method.

\begin{figure}
    \vspace{-.25cm}
    \centering
    \includegraphics[height=.2\textheight]{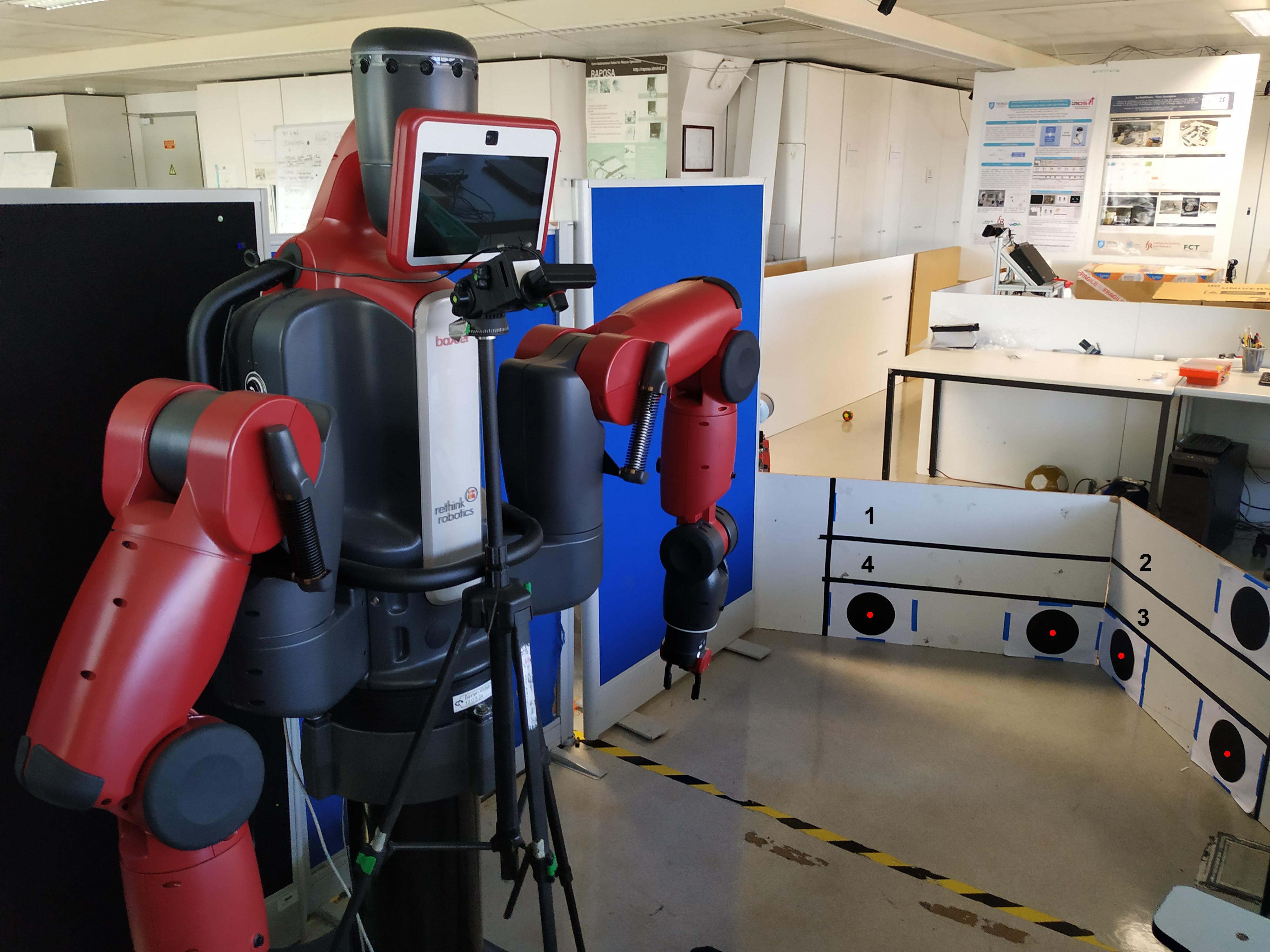}
    \caption{\it Experimental setup of the Baxter robot, with two seven degrees-of-freedom manipulators, equipped with cameras in their end-effectors. The target is composed of two wooden boards, with the numbers corresponding to the ones in Tab.~\ref{tab:spicaRealBaxter}. Furthermore, the four points used to retrieve the target's pose with respect to the camera are also depicted.
    }
    \label{fig:baxterSetup}
\end{figure}

\begin{table}[t]
    \vspace{.1cm}
    \centering
    \subfloat[{\it Direction estimation error on the Baxter dataset in radians.}]{
    \resizebox{.45\textwidth}{!}{
    \begin{tabular}{|c|c|c|c|c|}
        \cline{2-5}
        \multicolumn{1}{c|}{} & Line 1 & Line 2 & Line 3 & Line 4\\
        \hline
        \emph{MLO+M-P} & {\bf 0.062} & 0.061 & 0.059 & 0.60 \\
        \hline
        \emph{MHO+M-P} & 0.067 & {\bf 0.016} & {\bf 0.014} & {\bf 0.050} \\
        \hline
    \end{tabular}
    }
    }\\
    \subfloat[{\it Depth estimation error on the Baxter dataset in meters.}]{
    \resizebox{.45\textwidth}{!}{
    \begin{tabular}{|c|c|c|c|c|}
        \cline{2-5}
        \multicolumn{1}{c|}{} & Line 1 & Line 2 & Line 3 & Line 4 \\
        \hline
        \emph{MLO+M-P} & 0.006 & 0.063 & 0.077 & 0.003 \\
        \hline
        \emph{MHO+M-P} & {\bf 0.002} & {\bf 0.020} & {\bf 0.027} & {\bf 0.001} \\
        \hline
    \end{tabular}
    }
    }
    
    \caption{\it Direction and depth errors computed by \eqref{eq:errors} for the four lines depicted in Fig.~\ref{fig:baxterSetup} using the Baxter robot.}
    \label{tab:spicaRealBaxter}
\end{table}
\section{Conclusions}
\label{sec:conclusion}

In this work, we presented and evaluated different methods to perform incremental SfM using line features.
Given that the line dynamics in Pl\"ucker coordinates are not fully observable, we present two alternative representations.
The \emph{Sphere} representation was proposed in our previous work. It consisted of exploiting the fact that the measurement is a unit vector representing it in spherical coordinates to obtain a minimal line parameterization.
However, this formulation presents singularities.
To tackle this problem, we present a novel representation \emph{M-P} that models lines based on the moment vector and the view ray of the closest point of the line to the camera's optical center.
Structure-from-motion is achieved by exploiting state observers.
Two different methodologies were presented.
The \emph{MLO}, which is a state-of-the-art framework to estimate visual dynamical systems, and we show how to design the observer for both presented line representations.
The second estimation approach consists of a \emph{MHO}.
To the best of our knowledge, it is the first time these observers are applied to the SfM task.
The stability of the \emph{MHO} is analyzed, and it is shown that the estimation error is bounded for the \emph{M-P} representation.
Extensive simulation and real experiments were conducted, showing that the \emph{MHO} is more robust to measurement noise while achieving similar convergence times.
Concerning the two representations, the \emph{M-P} does not have singularities. Even though the observer for its system is slower than the \emph{Sphere} representation, it is more robust to measurement noise.

Future work includes coupling the \emph{MHO+M-P} in a Visual Servoing control.
The most straightforward approach is to plug the \emph{MHO+M-P} in the Visual Servoing task as in \cite{deluca2008}.
Another possibility is --similar to \cite{spica2017,rodrigues2020}-- to develop coupling mechanisms between standard Visual Servoing and the \emph{MHO+M-P}, to obtain control, which not only leads the features to the reference value but also optimize the path to improve the depth estimation.
Still, in the scope of Visual Servoing, we aim at exploiting the coupling of Model Predictive Control with our Moving Horizon Observer as in \cite{copp2014}.
One limitation of our method is the computational cost of estimating multiple lines.
Thus, we aim to exploit alternative solvers for our optimization problem, which would allow us to analyze the complexity of the entire estimation process. This analysis is not available for the solver used in this work (\cite{lagarias1998}).

\section{Acknowledgements}

We would like to thank the editor and the reviewers for the time devoted to reviewing our paper and the comments provided. This work was partially supported by the Portuguese FCT grants {\tt PD/BD/135015/2017} (through the NETSys Doctoral Program), LARSyS - FCT Project {\tt UIDB/50009/2020}, the RBCog-Lab research infrastructure {\tt PINFRA/22084/2016}, and by the European Union from the European Regional Development Fund under the Smart Growth Operational Programme as part of the project {\tt  POIR.01.01.01-00-0102/20}, with title "Development of an innovative, autonomous mobile robot for retail stores".

\bibliographystyle{IEEEtran}
\bibliography{IEEEabrv,references}

\vspace{-.75cm}
\begin{IEEEbiography}[{\includegraphics[width=1in,height=1.25in,clip,keepaspectratio]{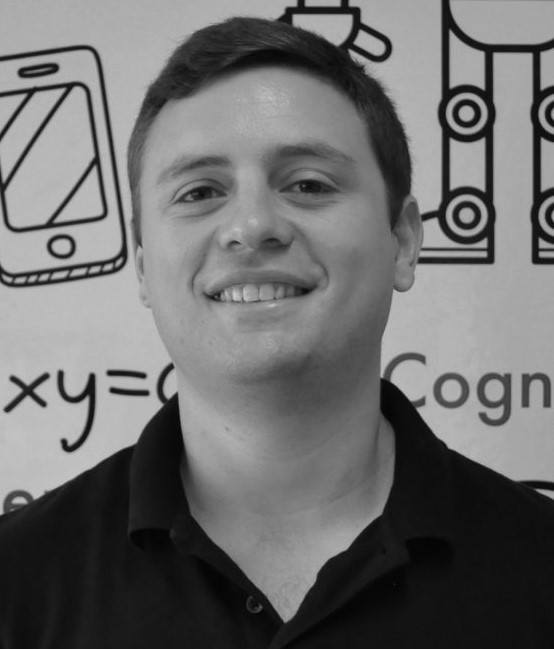}}]%
{Andr\'e Mateus}
was born in Nazar\'e, Portugal in 1991. He received his M. Sc. in Electrical and Computer Engineering from University of Lisbon in 2015. Currently, he is a fourth year Ph. D. student at Instuto Superior T\'ecnico, University of Lisbon, and  a Reseach Assistant at the Institute for Systems and Robotics, Lisbon. His research interests include Computer Vision and Robotics. He is a student member of IEEE.
\end{IEEEbiography}

\vspace{-.75cm}
\begin{IEEEbiography}[{\includegraphics[width=1in,height=1.25in,clip,keepaspectratio]{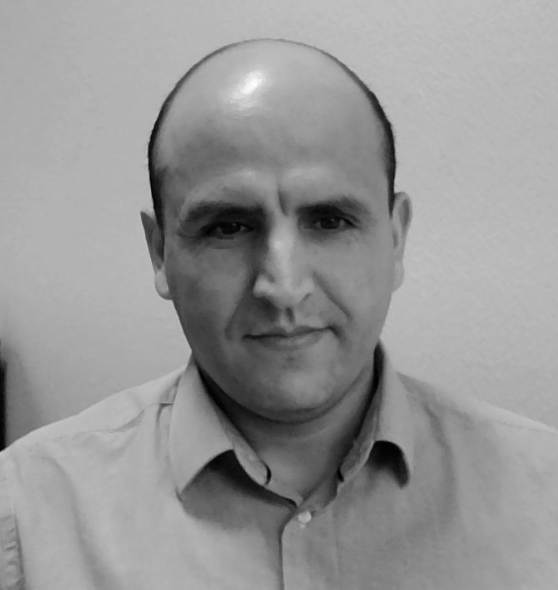}}]%
{Omar Tahri}
was born in Fez, Morocco, in 1976. He got his Master in photonics, images and system control from the Louis Pasteur University, Strasbourg, France, in 2000 and received his Ph.D.
degree in computer science from the University of Rennes, France, in March 2004. His research interests include robotics and computer vision. He is Full Professor and the head of the robot vision research team ImViA-ViBot at the university of Burgundy.
\end{IEEEbiography}

\vspace{-.75cm}
\begin{IEEEbiography}[{\includegraphics[width=1in,height=1.25in,clip,keepaspectratio]{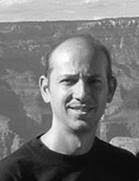}}]{A. Pedro Aguiar} (Senior Member, IEEE) received the Licenciatura, M.S., and Ph.D. degrees in Electrical and Computer Engineering (ECE) from the Instituto Superior T\'ecnico (IST), University of Lisbon, Portugal in 1994, 1998, and 2002, respectively. He is currently an Associate Professor with the ECE Department, Faculty of Engineering, University of Porto, Portugal. From 2002 to 2005, he was a Postdoctoral Researcher at the Center for Control, Dynamical-Systems, and Computation, University of California at Santa Barbara, CA, USA. From 2005 to 2012, he was an invited Professor with the IST. His research interests include motion control and perception systems of single and multiple autonomous robotic vehicles, and nonlinear control theory and applications.
\end{IEEEbiography}

\vspace{-.75cm}
\begin{IEEEbiography}[{\includegraphics[width=1in,height=1.25in,clip,keepaspectratio]{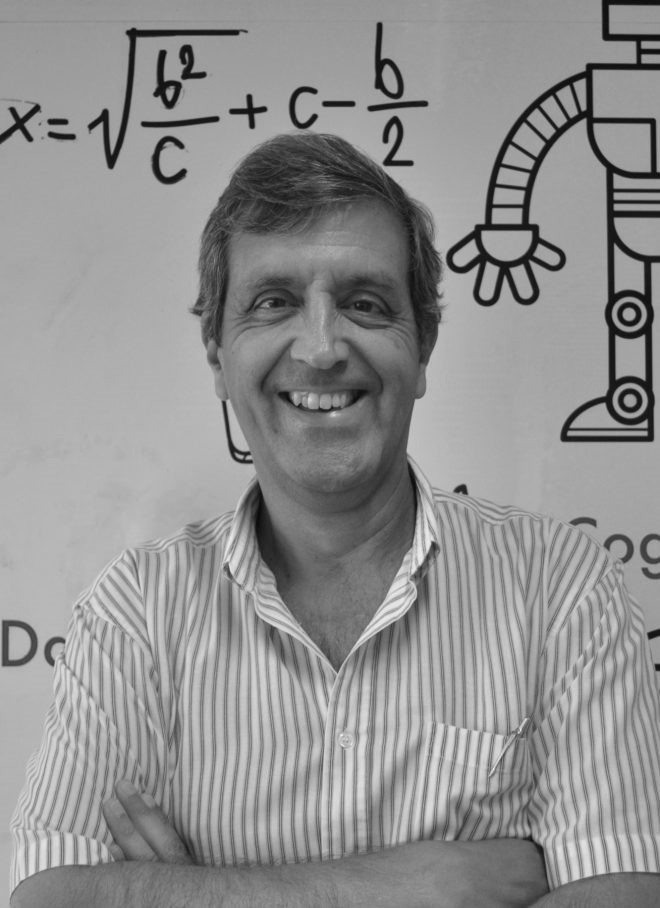}}]
{Pedro U. Lima} received the Ph.D. (1994) in Electrical Engineering from the Rensselaer Polytechnic Institute, NY, USA. Currently, he is a Full Professor at IST, Universidade de Lisboa, where he serves as the Deputy President for Scientific Matters of the Institute for Systems and Robotics.
Pedro was President and founding member of the Portuguese Robotics Society, and was awarded a 6-month Chair of Excellence at the Universidad Carlos III de Madrid, Spain in 2010. He is a Trustee of the RoboCup Federation.
His research interests lie in the areas of discrete event models of robot tasks and planning under uncertainty, with applications to networked robot systems.
\end{IEEEbiography}

\vspace{-.75cm}
\begin{IEEEbiography}[{\includegraphics[width=1in,height=1.25in,clip,keepaspectratio]{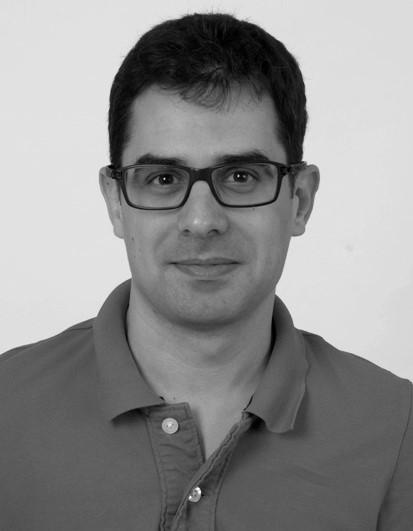}}]%
{Pedro Miraldo} received the Master and Doctoral degrees in Electrical and Computer Engineering from the University of Coimbra, Portugal, in 2008 and 2013. He was a postdoctoral researcher at Instituto Superior T\'ecncico (IST), University of Lisboa, Portugal, and KTH Royal Institute of Technology, Stockholm, Sweden, from 2014 to 2019. Currently, he is an Assistant Research at IST,
with more than 30 peer-review papers in the top conferences and journal articles. His research interests include 3D computer vision and robotics. He is a member of IEEE.
\end{IEEEbiography}

\end{document}